\title{Enforcing robust control guarantees within neural network policies}
\author{
Priya L. Donti\textsuperscript{1},
\enskip Melrose Roderick\textsuperscript{1},
\enskip Mahyar Fazlyab\textsuperscript{2},
\enskip J. Zico Kolter\textsuperscript{1,3}\\
\textsuperscript{1}{Carnegie Mellon University,\enskip \textsuperscript{2}Johns Hopkins University,\enskip
\textsuperscript{3}Bosch Center for AI}\\
\small{\texttt{\{pdonti, mroderick\}@cmu.edu},\, \texttt{mahyarfazlyab@jhu.edu},\, \texttt{zkolter@cs.cmu.edu}}
}
\DeclareMathOperator*{\minimize}{minimize}
\DeclareMathOperator*{\subjectto}{subject\;to}
\DeclareMathOperator*{\st}{s.t.}
\newtheorem{theorem}{Theorem}
\newtheorem{corollary}{Corollary}[theorem]
\newcommand{\dd}{\mathrm{d}}
\begin{document}

\maketitle

\begin{abstract}
When designing controllers for safety-critical systems, practitioners often face a challenging tradeoff between robustness and performance. 
While robust control methods provide rigorous guarantees on system stability under certain worst-case disturbances, they often yield simple controllers that perform poorly in the average (non-worst) case. 
In contrast, nonlinear control methods trained using deep learning have achieved state-of-the-art performance on many control tasks, but 
often lack robustness guarantees.
In this paper, we propose a technique that combines the strengths of these two approaches:~constructing a generic nonlinear control policy class, parameterized by neural networks, that nonetheless enforces the same provable robustness criteria as robust control.
Specifically, our approach entails integrating custom convex-optimization-based projection layers into a neural network-based policy. 
We demonstrate the power of this approach on several domains, 
improving in average-case performance over existing robust control methods and in worst-case stability over (non-robust) deep RL methods.
\end{abstract}

\section{Introduction}
The field of robust control, dating back many decades, has been able to provide rigorous guarantees on when controllers will succeed or fail in controlling a system of interest.
In particular, if the uncertainties in the underlying dynamics can be bounded in specific ways, these techniques can produce controllers that are provably robust even under worst-case conditions.
However, as the resulting policies tend to be simple (i.e., often linear), this can limit their performance in typical (rather than worst-case) scenarios.
In contrast, recent high-profile advances in deep reinforcement learning have yielded state-of-the-art performance on many control tasks, due to their ability to capture complex, nonlinear policies.
However, due to a lack of robustness guarantees, these techniques have still found limited application in safety-critical domains where an incorrect action (either during training or at runtime) can substantially impact the controlled system.

In this paper, we propose a method that combines the guarantees of robust control with the flexibility of deep reinforcement learning (RL).
Specifically, we consider the setting of nonlinear, time-varying systems with unknown dynamics, but where (as common in robust control) the uncertainty on these dynamics can be bounded in ways amenable to obtaining provable performance guarantees.
Building upon specifications provided by traditional robust control methods in these settings, we construct a new class of nonlinear policies that are parameterized by neural networks, but that are nonetheless \emph{provably robust}.
In particular, we \emph{project} the outputs of a nominal (deep neural network-based) controller onto a space of stabilizing actions characterized by the robust control specifications.
The resulting nonlinear control policies are trainable using standard approaches in deep RL, yet are \emph{guaranteed} to be stable under the same worst-case conditions as the original robust controller.

We describe our proposed deep nonlinear control policy class and derive efficient, differentiable projections for this class under various models of system uncertainty common in robust control.
We demonstrate our approach on several different domains, including synthetic linear differential inclusion (LDI) settings, the cart-pole task, a quadrotor domain, and a microgrid domain.  Although these domains are simple by modern RL standards, we show that purely RL-based methods often produce unstable policies in the presence of system disturbances, both during and after training.
In contrast, we show that our method remains stable 
even when worst-case disturbances are present, while improving upon the performance of traditional robust control methods.

\section{Related work}
We employ techniques from robust control, (deep) RL, and differentiable optimization to learn provably robust nonlinear controllers. 
We discuss these areas of work in connection to our approach.

\textbf{Robust control.}
Robust control is concerned with the design of feedback controllers for dynamical systems with modeling uncertainties and/or external disturbances \citep{zhou1998essentials,bacsar2008h}, specifically controllers with guaranteed performance under worst-case conditions. 
Many classes of robust control problems in both the time and frequency domains can be formulated using linear matrix inequalities (LMIs) \citep{boyd1994linear,kothare1996robust}; for reasonably-sized problems, these LMIs can be solved using off-the-shelf numerical solvers based on interior-point or first-order (gradient-based) methods.
However, providing stability guarantees often requires the use of simple (linear) controllers, which greatly limits average-case performance.
Our work seeks to improve performance via \emph{nonlinear} controllers that nonetheless retain the same stability guarantees. 

\textbf{Reinforcement learning (RL).}
In contrast, 
RL (and specifically, deep RL) is not restricted to simple controllers or problems with 
uncertainty bounds on the dynamics.
Instead, deep RL seeks to learn an optimal control policy, represented by a neural network, by directly interacting with an unknown environment.
These methods have shown impressive results in a variety of complex control tasks (e.g., \cite{mnih2015human, akkaya2019solving}); see \cite{bucsoniu2018reinforcement} for a survey. However, due to its lack of safety guarantees, deep RL has been predominantly applied to simulated environments or highly-controlled real-world problems, where 
system failures are either not costly or not possible.

Efforts to address the lack of safety and stability in RL fall into several main categories.
The first tries to combine control-theoretic ideas, predominantly robust control, with the nonlinear control policy benefits of RL (e.g.,  \cite{morimoto2005robust,abu2006policy,feng2009game,liu2013neural,wu2013simultaneous,luo2014off,friedrich2017robust,pinto2017robust,jin2018stability,chang2019neural,han2019h,zhang2020policy}).
For example, RL has been used to address stochastic stability in $H_{\infty}$ control synthesis settings by jointly learning Lyapunov functions and policies in these settings \citep{han2019h}. 
As another example, RL has been used to address $H_{\infty}$ control for continuous-time systems via min-max differential games, in which the controller and disturbance are the ``minimizer'' and ``maximizer'' \citep{morimoto2005robust}.
We view our approach as thematically aligned with this previous work, though our method is able to capture not only $H_{\infty}$ settings, but also a much broader class of robust control settings.

Another category of methods addressing this challenge is safe RL, which aims to learn control policies while maintaining some notion of safety during or after learning. 
Typically, these methods attempt to restrict the RL algorithm to a safe region of the state space by making strong assumptions about the smoothness of the underlying dynamics, e.g., that the dynamics can be modeled as a Gaussian process (GP)~\citep{turchetta16safeexpl, akametalu14reachability} or are Lipschitz continuous~\citep{berkenkamp17stability, wachi18safeexpl}.
This framework is in theory more general than our approach, which requires using stringent uncertainty bounds (e.g.~state-control norm bounds) from robust control. However, there are two key benefits to our approach.
First, norm bounds or polytopic uncertainty can accommodate sharp discontinuities in the continuous-time dynamics.
Second, convex projections 
(as used in our method) scale polynomially with the state-action size, whereas GPs in particular scale exponentially (and are therefore difficult to extend to high-dimensional problems).

A third category of methods 
uses Constrained Markov Decision Processes (C-MDPs).
These methods seek to maximize a discounted reward while bounding some discounted cost function
\citep{altman1999constrained, achiam17cpo, taleghan2018efficient, yang2020projection}.
While these methods do not require knowledge of the cost functions a-priori, they only guarantee the cost constraints hold during test time.
Additionally, using C-MDPs can yield other complications, such as optimal policies being stochastic and the constraints only holding for a subset of states.

\textbf{Differentiable optimization layers.} A great deal of recent work has studied differentiable optimization layers for neural networks:~e.g., layers for quadratic programming \citep{amos2017optnet}, SAT solving \citep{wang2019satnet}, submodular optimization \citep{djolonga2017differentiable,tschiatschek2018differentiable}, cone programs \citep{agrawal2019differentiable}, and other classes of optimization problems \citep{gould2019deep}.
These layers can be used to construct neural networks with useful inductive bias for particular domains 
or to enforce that networks obey hard constraints dictated by the settings in which they are used.
We create fast, custom differentiable optimization layers for the latter purpose, namely, to project neural network outputs into a set of certifiably stabilizing actions.

\section{Background on LQR and robust control specifications}
In this paper, our aim is to control nonlinear (continuous-time) dynamical systems of the form
\begin{equation}
    \dot{x}(t) \in A(t) x(t) + B(t) u(t) + G(t) w(t),
\label{eq:ldi-dyn}
\end{equation}
where $x(t) \in \mathbb{R}^s$ denotes the state at time $t$; $u(t) \in \mathbb{R}^a$ is the control input; $w(t) \in \mathbb{R}^d$ captures both external (possibly stochastic) disturbances and any modeling discrepancies;
$\dot{x}(t)$ denotes the time derivative of the state $x$ at time $t$; and $A(t) \in \mathbb{R}^{s\times s}, B(t) \in \mathbb{R}^{s \times a},$ $G(t) \in \mathbb{R}^{s \times d}$. 
This class of models is referred to as linear differential inclusions (LDIs); however, we note that despite the name, this class does indeed characterize \emph{nonlinear} systems, as, e.g., $w(t)$ can depend arbitrarily on $x(t)$ and $u(t)$ (though we omit this dependence in the notation for brevity).
Within this class of models, it is often possible to construct robust control specifications certifying system stability.
Given such specifications, our proposal is to learn nonlinear (deep neural network-based) policies that \emph{provably} satisfy these specifications 
while optimizing some objective of interest. 
We start by giving background on the robust control specifications and objectives considered in this work. 

\subsection{Robust control specifications}
\label{sec:robust-control-specifications} 
In the continuous-time, infinite-horizon settings we consider here, the goal of robust control is often to construct a time-invariant control policy $u(t) = \pi(x(t))$, alongside some certification that guarantees that the controlled system will be stable (i.e., that trajectories of the system will converge to an equilibrium state, usually $x=0$ by convention; see \cite{haddad2011nonlinear} for a more formal definition).
For many classes of systems\footnote{In this work, we consider sub-classes of system~\eqref{eq:ldi-dyn} that may indeed be stochastic (e.g., due to a stochastic external disturbance $w(t)$), but that can be bounded so
as to be amenable to deterministic stability analysis. However, other settings may require stochastic stability analysis; please see \cite{astrom1971introduction}.}, this certification is typically in the form of a positive definite Lyapunov function $V : \mathbb{R}^s \rightarrow \mathbb{R}$, with $V(0)=0$ and $V(x) > 0$ for all $x \neq 0$, such that the function is decreasing along trajectories -- for instance,
\begin{equation}
    \dot{V}(x(t)) \leq -\alpha V(x(t))
\label{eq:exp-stab}
\end{equation}
for some design parameter $\alpha > 0$.
(This particular condition implies \emph{exponential stability} with a rate of convergence $\alpha$.\footnote{
See, e.g., \cite{haddad2011nonlinear} for a more rigorous definition of (local and global) exponential stability. Condition~\eqref{eq:exp-stab} comes from \emph{Lyapunov's Theorem}, which characterizes various notions of stability using Lyapunov functions.}) 
For certain classes of bounded dynamical systems, time-invariant linear control policies $u(t) = Kx(t)$, and quadratic Lyapunov functions $V(x) = x^T P x$, it is possible to construct such guarantees using semidefinite programming.
For instance, consider the class of norm-bounded LDIs (NLDIs)
\begin{equation}
\label{eq:nldi-dyn}
\dot{x} = Ax(t) + Bu(t) + Gw(t), \;\; \|w(t)\|_2 \leq \|Cx(t) + Du(t)\|_2,
\end{equation}
where $A \in \mathbb{R}^{s \times s}$, $B \in \mathbb{R}^{s \times a}$, $G \in \mathbb{R}^{s \times d}$, $C \in \mathbb{R}^{k \times s}$, and $D \in \mathbb{R}^{k \times a}$ are time-invariant and known, and the disturbance $w(t)$ is arbitrary (and unknown) but obeys the norm bounds above.\footnote{A slightly more complex formulation involves an additional term in the norm bound, i.e., $Cx(t) + Du(t) + Hw(t)$, which creates a quadratic inequality in $w$. The mechanics of obtaining robustness specifications in this setting are largely the same as presented here, though with some additional terms in the equations. As such, as is often done, we assume that $H = 0$ for simplicity.} 
For these systems, it is possible to specify a set of stabilizing policies via a set of linear matrix inequalities (LMIs, \cite{boyd1994linear}):
\begin{equation}
\begin{bmatrix} 
AS + SA^T + \mu G G^T + BY + Y^TB^T + \alpha S & SC^T + Y^TD^T \\
CS + DY & -\mu I
\end{bmatrix}
\preceq 0, \;\; S \succ 0, \;\; \mu > 0,
\label{eq:nldi-lmi}
\end{equation} 
where $S \in \mathbb{R}^{s \times s}$ and $Y \in \mathbb{R}^{a \times s}$.
For matrices $S$ and $Y$ satisfying~\eqref{eq:nldi-lmi},
$K = Y S^{-1}$ and $P = S^{-1}$ are then a stabilizing linear controller gain and Lyapunov matrix, respectively.
While the LMI above is specific to NLDI systems, this general paradigm of constructing stability specifications using LMIs applies to many settings commonly considered in robust control (e.g., settings with norm-bounded disturbances or polytopic uncertainty, or $H_\infty$ control settings).
More details about these types of formulations are given in, e.g.,~\citet{boyd1994linear}; in addition, we provide the relevant LMI constraints for the settings we consider in this work in Appendix~\ref{appsec:robust-control-lmis}.

\subsection{LQR control objectives}
\label{sec:control objectives}
In addition to designing for stability, it is often desirable to optimize some objective characterizing controller performance.
While our method can optimize performance with respect to any arbitrary cost or reward function, to make comparisons with existing methods easier, for this paper we consider the well-known infinite-horizon ``linear-quadratic regulator'' (LQR) cost, defined as
\begin{equation}
\int_0^\infty \left ( x(t)^T Q x(t) + u(t)^T R u(t) \right ) dt,
\label{eq:lqr-cost}
\end{equation}
for some $Q \in \mathbb{S}^{s \times s} \succeq 0$ and $R \in \mathbb{S}^{a \times a} \succ 0$.
If the control policy is assumed to be time-invariant and linear as described above (i.e., $u(t) = Kx(t)$),
minimizing the LQR cost subject to stability constraints can be cast as an SDP (see, e.g., \cite{yao2001primal}) and solved using off-the-shelf numerical solvers -- a fact that we exploit in our work.
For example, to obtain an optimal linear time-invariant controller for the NLDI systems described above, we can solve
\begin{equation}
\minimize_{S,Y} \;\; \mathrm{tr}(QS) + \mathrm{tr}(R^{1/2} Y S^{-1} Y^T R^{1/2}) \;\;\; \st~\text{Equation~\eqref{eq:nldi-lmi} holds}.
\label{eq:lqr-linear-opt}
\end{equation}

\section{Enforcing robust control guarantees within neural networks}
\label{sec:method}

We now present the main contribution of our paper:~A class of \emph{nonlinear} control policies, potentially parameterized by deep neural networks, that is guaranteed to obey the same stability conditions enforced by the robustness specifications described above. 
The key insight of our approach is as follows:~While it is difficult to derive specifications that globally characterize the stability of a generic nonlinear controller, if we are given \emph{known} robustness specifications, we can create a sufficient condition for stability by simply enforcing that our policy satisfies these specifications at all $t$.
For instance, given a known Lyapunov function, we can enforce exponential stability by ensuring that our policy sufficiently decreases this function (e.g., satisfies Equation~\eqref{eq:exp-stab}) at any given $x(t)$.

In the following sections, we present our nonlinear policy class, as well as our general framework for learning provably robust policies using this policy class. We then derive the instantiation of this framework for various settings of interest.
In particular, this involves constructing (custom) differentiable projections that can be used to adjust the output of a nominal neural network to satisfy desired robustness criteria. 
For simplicity of notation, we will often suppress the $t$-dependence of $x$, $u$, and $w$, but we note that these are continuous-time quantities as before.

\subsection{A provably robust nonlinear policy class}

Given a dynamical system of the form~\eqref{eq:ldi-dyn} and a quadratic Lyapunov function $V(x) = x^T P x$, let
\begin{equation}
    \mathcal{C}(x) := \{ u \in \mathbb{R}^a \mid \dot{V}(x) \leq -\alpha V(x) \;\;\; \forall \dot{x} \in A(t) x + B(t)u + G(t) w \}
\label{eq:general-set}
\end{equation}
denote a set of actions that, for a \emph{fixed} state $x \in \mathbb{R}^s$,
are guaranteed to satisfy the exponential stability condition~\eqref{eq:exp-stab} (even under worst-case realizations of the disturbance $w$).
We note that this ``safe'' set is non-empty if $P$ satisfies the relevant LMI constraints (e.g., system~\eqref{eq:nldi-lmi} for NLDIs) characterizing robust linear time-invariant controllers, as there is then some $K$ corresponding to $P$ 
such that $Kx \in \mathcal{C}(x)$ for all states $x$. 

Using this set of actions, we then construct a robust nonlinear policy class that \emph{projects} the output of some neural network onto this set. 
More formally, consider an arbitrary nonlinear (neural network-based) policy class $\hat{\pi}_\theta : \mathbb{R}^{s} \rightarrow \mathbb{R}^a$ parameterized by $\theta$, and let $\mathcal{P}_{(\cdot)}$ denote the projection operator for some set $(\cdot)$.
We then define our robust policy class as $\pi_\theta :  \mathbb{R}^{s} \rightarrow \mathbb{R}^a$, where
\begin{equation}
\label{eq:robust-policy-class}
    \pi_\theta(x) = \mathcal{P}_{\mathcal{C}(x)}(\hat{\pi}_\theta (x)).
\end{equation}
We note that this policy class is differentiable if the projections can be implemented in a differentiable manner (e.g., using convex optimization layers \citep{agrawal2019differentiable}, though we construct efficient custom solvers for our purposes).
Importantly, as all policies in this class satisfy
the stability 
condition~\eqref{eq:exp-stab} for all states $x$ and at all times $t$, 
these policies are
\emph{certifiably} robust under the same conditions as the original (linear) controller for which the Lyapunov function $V(x)$ was constructed.

	\begin{algorithm}[t!]
		\caption{Learning provably robust controllers with deep RL}
		\begin{algorithmic}[1]
		    \State \textbf{input} performance objective $\ell$ \hspace{62pt} \emph{// e.g., LQR cost}
		    \State \textbf{input} stability requirement \hspace{75pt} \emph{// e.g., $\dot{V}(x) \leq -\alpha V(x) $}
		    \State \textbf{input} policy optimizer $\mathcal{A}$ \hspace{82pt} \emph{// e.g., a planning or RL algorithm}
		    \State \textbf{compute} $P$, $K$ satisfying LMI constraints \hspace{15pt} \emph{// e.g., by optimizing~\eqref{eq:lqr-linear-opt}}
		    \State \textbf{construct} specifications $\mathcal{C}(x)$ using $P$ \hspace{29pt} \emph{// as defined in Equation~\eqref{eq:general-set}}
		    \State \textbf{construct} robust policy class $\pi_{\theta}$ using $\mathcal{C}$ \;\;\;\;\;\,\,\, \emph{// as defined in Equation~\eqref{eq:robust-policy-class}}
		    \State \textbf{train} $\pi_{\theta}$ via $\mathcal{A}$ to optimize Equation~\eqref{eq:stable-rl-opt}
		    \State \textbf{return} $\pi_{\theta}$
		\end{algorithmic}
		\label{alg:main-alg}
	\end{algorithm}

Given this policy class and some performance objective $\ell$ (e.g., LQR cost), our goal is to then find parameters $\theta$ such that the corresponding policy optimizes this objective -- i.e., to solve
\begin{equation}
\label{eq:stable-rl-opt}
\begin{split}
    \minimize_{\theta}&\;\; \int_0^\infty \ell\left(\,x,\, \pi_\theta(x)\,\right)dt \;\;\;\; \st~\dot{x} \in A(t) x + B(t)\pi_\theta(x) + G(t) w.
\end{split}
\end{equation}
Since $\pi_\theta$ is differentiable,
we can solve this problem via a variety of approaches, e.g., a model-based planning algorithm if the true dynamics are known, or virtually any (deep) RL algorithm if the dynamics are unknown.\footnote{While this problem is infinite-horizon and continuous in time, in practice, one would optimize it in discrete time over a large finite time horizon.}

This general procedure for constructing stabilizing controllers is summarized in Algorithm~\ref{alg:main-alg}.
While seemingly simple, this formulation presents a powerful paradigm:~by simply transforming the output of a neural network, we can employ an expressive policy class to optimize an objective of interest while \emph{ensuring} the resultant policy will stabilize the system during both training and testing.

We instantiate our framework by constructing ``safe'' sets $\mathcal{C}(x)$ and their associated (differentiable) projections  $\mathcal{P}_{\mathcal{C}(x)}$ for three settings of interest: NLDIs, polytopic linear differential inclusions (PLDIs), and $H_\infty$ control settings. 
As an example, we describe this procedure below for NLDIs, and refer readers to Appendix~\ref{appsec:proofs} for corresponding formulations for the additional settings we consider.

\subsection{Example: NLDIs}

In order to apply our framework to the NLDI setting~\eqref{eq:nldi-dyn}, we first compute a quadratic Lyapunov function $V(x) = x^T P x$ by solving the optimization problem~\eqref{eq:lqr-linear-opt} for the given system via semidefinite programming.
We then use the resultant Lyapunov function to compute the system-specific ``safe'' set $\mathcal{C}(x)$, and then create a fast, custom differentiable solver to project onto this set.

\subsubsection{Computing sets of stabilizing actions}
\label{sec:nldi-proj-deriv}

Given $P$, we compute $\mathcal{C}_{\text{NLDI}}(x)$ as the set of actions $u \in \mathbb{R}^a$ that, for each state $x \in \mathbb{R}^s$, satisfy the stability condition~\eqref{eq:exp-stab} at that state \emph{under even a worst-case realization of the dynamics} (i.e., in this case, even under a worst-case disturbance $w$). The form of the resultant set is given below.

\newcommand{\nldithm}{Consider the NLDI system~\eqref{eq:nldi-dyn}, some stability parameter $\alpha > 0$, and a Lyapunov function $V(x) = x^TPx$ with $P$ satisfying Equation~\eqref{eq:nldi-lmi}. Assuming $P$ exists, define
\begin{equation*}
\mathcal{C}_{\text{NLDI}}(x) := \left\{ u \in \mathbb{R}^a \mid \|Cx + Du\|_2 \leq \frac{-x^T P B}{\|G^T P x\|_2}u - \frac{x^T (2PA + \alpha P)x}{2\|G^T P x\|_2} \right\}
\end{equation*}
for all states $x \in \mathbb{R}^s$. For all $x$, $\mathcal{C}_{\text{NLDI}}(x)$ is a non-empty set of actions that satisfy the exponential stability condition~\eqref{eq:exp-stab}. Further, $\mathcal{C}_{\text{NLDI}}(x)$ is a convex set in $u$.}
\begin{theorem}
\label{thm:nldi-set}
\nldithm
\end{theorem}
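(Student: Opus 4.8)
The plan is to establish the three claims of Theorem~\ref{thm:nldi-set} in sequence: (i) that the stated inequality exactly characterizes the actions satisfying~\eqref{eq:exp-stab} under worst-case $w$, (ii) non-emptiness, and (iii) convexity.

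\textbf{Characterizing the stability condition.} First I would expand $\dot V(x) = \dot x^T P x + x^T P \dot x = 2 x^T P \dot x$ using symmetry of $P$, and substitute $\dot x = Ax + Bu + Gw$ to obtain $\dot V(x) = x^T(PA + A^T P)x + 2x^T PBu + 2x^T PGw$. Since~\eqref{eq:exp-stab} must hold for \emph{every} admissible disturbance, i.e., every $w$ with $\|w\|_2 \le \|Cx+Du\|_2$, I would maximize the only $w$-dependent term $2x^T PGw$ over this ball; by Cauchy--Schwarz its maximum is $2\|G^T Px\|_2\,\|Cx+Du\|_2$. Imposing that this worst-case value of $\dot V(x)$ be at most $-\alpha x^T Px$ and rearranging (dividing through by $2\|G^T Px\|_2$, and using the scalar identity $x^T(PA + A^T P)x = 2x^T PAx = x^T(2PA)x$) yields precisely the inequality defining $\mathcal{C}_{\text{NLDI}}(x)$. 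The one point needing care is $G^T Px = 0$: there the worst-case term vanishes, the constraint degenerates to the affine inequality $x^T(2PA+\alpha P)x + 2x^T PBu \le 0$, and the stated formula should be read in this cleared-denominator/limiting sense (equivalently one assumes $\|G^T Px\|_2 > 0$ for $x \ne 0$; for $x=0$ the condition holds vacuously for all $u$).

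\textbf{Non-emptiness.} Here I would show that the linear policy $u = Kx$ with $K = YS^{-1}$, $P = S^{-1}$ lies in $\mathcal{C}_{\text{NLDI}}(x)$ for every $x$, which immediately gives non-emptiness. Starting from the LMI~\eqref{eq:nldi-lmi}, I would take the Schur complement with respect to the $-\mu I$ block to get $AS + SA^T + BY + Y^TB^T + \alpha S + \mu GG^T + \tfrac1\mu(SC^T + Y^TD^T)(CS + DY) \preceq 0$, then apply the congruence $M \mapsto S^{-1} M S^{-1}$ and substitute $Y = KS$, $P = S^{-1}$ to arrive at $PA + A^TP + PBK + K^TB^TP + \alpha P + \mu PGG^TP + \tfrac1\mu(C+DK)^T(C+DK) \preceq 0$. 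Multiplying on both sides by $x$ and invoking the scalar AM--GM bound $\mu\|G^TPx\|_2^2 + \tfrac1\mu\|(C+DK)x\|_2^2 \ge 2\|G^TPx\|_2\,\|(C+DK)x\|_2$ produces exactly the inequality certifying that $Kx \in \mathcal{C}_{\text{NLDI}}(x)$ (reversing the rearrangement of stage (i) with $u = Kx$, so $Cx + Du = (C+DK)x$).

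\textbf{Convexity and main obstacle.} Convexity is then immediate: for fixed $x$, $\mathcal{C}_{\text{NLDI}}(x)$ is defined by a single second-order cone constraint $\|Cx+Du\|_2 \le (\text{affine in } u)$, whose left side is a convex function of $u$ (a norm composed with an affine map) and whose right side is affine, so the sublevel set is convex. I expect stage (ii) to be the main obstacle: it requires correctly chaining the Schur-complement step, the congruence by $S^{-1}$, and the AM--GM step so that LMI feasibility translates into the Cauchy--Schwarz-tight worst-case stability inequality, and the bookkeeping of which terms are symmetric, together with the role of the free scalar $\mu$, is the delicate part. Stages (i) and (iii) are routine once the worst-case disturbance is identified.
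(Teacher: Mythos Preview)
Your proposal is correct and follows essentially the same approach as the paper: expand $\dot V$, maximize the disturbance term over the $\ell_2$ ball (the paper phrases this as the closed-form optimum of a linear function over a ball, you as Cauchy--Schwarz), rearrange, and then observe that the result is a second-order cone constraint in $u$. For non-emptiness the paper simply appeals to the LMI~\eqref{eq:nldi-lmi} in one line (the Schur-complement and congruence manipulations you write out are essentially the reverse of the derivation in Appendix~\ref{appsec:nldi-constraint}), whereas you make the argument self-contained via the explicit AM--GM step; you also flag the $G^T Px = 0$ edge case, which the paper does not address.
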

\vspace{-10pt}
\begin{proof} 
We seek to find a set of actions such that the condition~\eqref{eq:exp-stab} is satisfied along $\emph{all}$ possible trajectories of~\eqref{eq:nldi-dyn}. A set of actions satisfying this condition at a given $x$ is given by 
\begin{equation*}
    \mathcal{C}_{\text{NLDI}}(x) := \left\{ u \in \mathbb{R}^a \mid \sup_{w : \|w\|_2 \leq \|Cx + Du \|_2 } \dot{V}(x) \leq -\alpha V(x) \right\}.
\end{equation*}

Let $\mathcal{S} := \{ {w : \|w\|_2 \leq \|Cx + Du \|_2 } \}$. 
We can then rewrite the left side of the above inequality as
\begin{equation*}
\label{eq:nldi-vdot}
\begin{split}
\sup_{w \in \mathcal{S} } \dot{V}(x)
&= \sup_{w \in \mathcal{S} } \dot{x}^T P x + x^T P \dot{x}\; = \; 2 x^T P (Ax+Bu) + \sup_{w \in \mathcal{S} } 2x^T P Gw \\
&= 2 x^T P (Ax+Bu) + 2\|G^T P x\|_2 \|Cx + Du \|_2,
\end{split}
\end{equation*}
by the definition of the NLDI dynamics and the closed-form minimization of a linear term over an $L_2$ ball.  
Rearranging yields an inequality of the desired form. 
We note that by definition of the specifications~\eqref{eq:nldi-lmi}, there is some $K$ corresponding to $P$ such that the policy $u = Kx$ satisfies the exponential stability condition~\eqref{eq:exp-stab}; thus, $Kx \in \mathcal{C}_{\text{NLDI}}$, and  $\mathcal{C}_{\text{NLDI}}$ is non-empty.
Further, as the above inequality represents a second-order cone constraint in $u$, this set is convex in $u$.
\end{proof}

We further consider the special case where $D = 0$, i.e., the norm bound on 
$w$ does not depend on the control action. 
This form of NLDI arises in many common settings (e.g., where $w$ characterizes linearization error in a nonlinear system but the dynamics depend only linearly on the action), and is one for which we can compute the relevant projection in closed form (as described shortly).

\begin{corollary}
Consider the NLDI system~\eqref{eq:nldi-dyn} with $D=0$, some stability parameter $\alpha > 0$, and Lyapunov function $V(x) = x^TPx$ with $P$ satisfying Equation~\eqref{eq:nldi-lmi}. Assuming $P$ exists, define
\begin{equation*}
\mathcal{C}_{\text{NLDI-0}}(x) := \left\{ u \in \mathbb{R}^a \mid 2 x^T P B u \leq - x^T (2PA + \alpha P)x - 2\|G^T P x\|_2 \|Cx\|_2 \right\}
\end{equation*}
for all states $x \in \mathbb{R}^s$. For all $x$, $\mathcal{C}_{\text{NLDI-0}}(x)$ is a non-empty set of actions that satisfy the exponential stability condition~\eqref{eq:exp-stab}. Further, $\mathcal{C}_{\text{NLDI-0}}(x)$ is a convex set in $u$.
\label{thm:nldi-0-set}
\end{corollary}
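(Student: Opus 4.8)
The plan is to obtain this corollary as a direct specialization of Theorem~\ref{thm:nldi-set} to the case $D = 0$. First I would recall from the statement of Theorem~\ref{thm:nldi-set} that, whenever $G^T P x \neq 0$, the safe set is
\begin{equation*}
\mathcal{C}_{\text{NLDI}}(x) = \left\{ u \in \mathbb{R}^a \mid \|Cx + Du\|_2 \leq \frac{-x^T P B}{\|G^T P x\|_2}u - \frac{x^T (2PA + \alpha P)x}{2\|G^T P x\|_2} \right\}.
\end{equation*}
Substituting $D = 0$ and multiplying both sides of the defining inequality by the nonnegative quantity $2\|G^T P x\|_2$ yields $2\|G^T P x\|_2 \|Cx\|_2 \leq -2 x^T P B u - x^T(2PA + \alpha P)x$, and rearranging the terms gives exactly the inequality defining $\mathcal{C}_{\text{NLDI-0}}(x)$. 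Hence the two sets coincide when $G^T P x \neq 0$.

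Next I would dispatch the degenerate case $G^T P x = 0$ separately, since the statement of Theorem~\ref{thm:nldi-set} divides by $\|G^T P x\|_2$. Here I would return to the intermediate step in the proof of Theorem~\ref{thm:nldi-set}: the worst-case value of $\dot V(x)$ is $2 x^T P(Ax + Bu) + 2\|G^T P x\|_2\|Cx + Du\|_2$, which with $D=0$ and $G^T P x = 0$ collapses to $2 x^T P (Ax + Bu)$. Requiring this to be at most $-\alpha x^T P x$ reproduces the inequality $2 x^T P B u \leq -x^T(2PA+\alpha P)x$, which is precisely the defining inequality of $\mathcal{C}_{\text{NLDI-0}}(x)$ with the term $2\|G^T P x\|_2\|Cx\|_2$ vanishing. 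So the closed-form description of $\mathcal{C}_{\text{NLDI-0}}(x)$ is valid for all $x$, and in every case it equals the set of actions satisfying the exponential stability condition~\eqref{eq:exp-stab} under a worst-case disturbance $w$.

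Finally, non-emptiness and convexity come essentially for free. Non-emptiness follows from the corresponding claim in Theorem~\ref{thm:nldi-set} (equivalently, from the fact that the LMI~\eqref{eq:nldi-lmi} furnishes a gain $K$ with $Kx \in \mathcal{C}_{\text{NLDI-0}}(x)$ for every $x$). Convexity is even simpler than in the general NLDI case: with $D=0$ the defining constraint is affine in $u$ for each fixed $x$, so $\mathcal{C}_{\text{NLDI-0}}(x)$ is a half-space (or all of $\mathbb{R}^a$), and is therefore convex. I do not anticipate a real obstacle; the only point requiring care is the $G^T P x = 0$ case, where one must avoid the division appearing in Theorem~\ref{thm:nldi-set}'s statement and argue directly from the supremum computation, as above.
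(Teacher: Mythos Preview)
Your proposal is correct and follows essentially the same route as the paper: the paper's proof simply sets $D=0$ in Theorem~\ref{thm:nldi-set}, rearranges, and observes that the resulting constraint is linear in $u$. Your treatment is in fact slightly more careful than the paper's, since you explicitly handle the degenerate case $G^T P x = 0$ (where the inequality in Theorem~\ref{thm:nldi-set} involves a division by zero) by going back to the supremum computation, whereas the paper glosses over this point.
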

\begin{proof}
The result follows by setting $D=0$ in Theorem~\ref{thm:nldi-set} and rearranging terms. As the above inequality represents a linear constraint in $u$, this set is convex in $u$.
\end{proof}

\subsubsection{Deriving efficient, differentiable projections}

For the general NLDI setting~\eqref{eq:nldi-dyn}, we note that the relevant projection $\mathcal{P}_{\mathcal{C}_{\text{NLDI}}(x)}$ (see Theorem~\ref{thm:nldi-set}) represents a projection onto a second-order cone constraint.
As this projection does not necessarily have a closed form, we must implement it using a differentiable optimization solver (e.g., \cite{agrawal2019differentiable}).
For computational efficiency purposes, we implement a custom solver that employs an accelerated projected dual gradient method for the forward pass, and employs implicit differentiation through the fixed point equations of this solution method to compute relevant gradients for the backward pass. 
Derivations and additional details are provided in Appendix~\ref{appsec:admm-solver}.

In the case where $D=0$ (see Corollary~\ref{thm:nldi-0-set}), we note that the projection operation $\mathcal{P}_{\mathcal{C}_{\text{NLDI-0}}(x)}$ does have a closed form, and can in fact be implemented via a single ReLU operation. 
Specifically, defining $\eta^T := 2 x^T P B$ and $\zeta := - x^T (2PA + \alpha P)x - 2\|G^T P x\|_2 \|Cx\|_2$, we see that 
\begin{equation}
\label{eq:nldi0-proj}
{\small \begin{split}
\mathcal{P}_{\mathcal{C}_{\text{NLDI-0}}(x)} \left ( \hat{\pi}(x) \right ) \; =\; \begin{cases}
\hat{\pi}(x) & \text{if } \eta^T\hat{\pi}(x) \leq \zeta \\
\strut\hat{\pi}(x) - \frac{\eta^T\hat{\pi}(x) - \zeta}{\eta^T \eta}\eta & \text{otherwise} \\
\end{cases} \;\; = \; \hat{\pi}(x) - \operatorname{ReLU}\left(\frac{\eta^T\hat{\pi}(x) - \zeta}{\eta^T \eta} \right) \eta.
\end{split}}
\end{equation}

\section{Experiments}

\begin{figure}[t]
    \centering
    \includegraphics[trim=0 0 30 0,clip,width=0.99\textwidth]{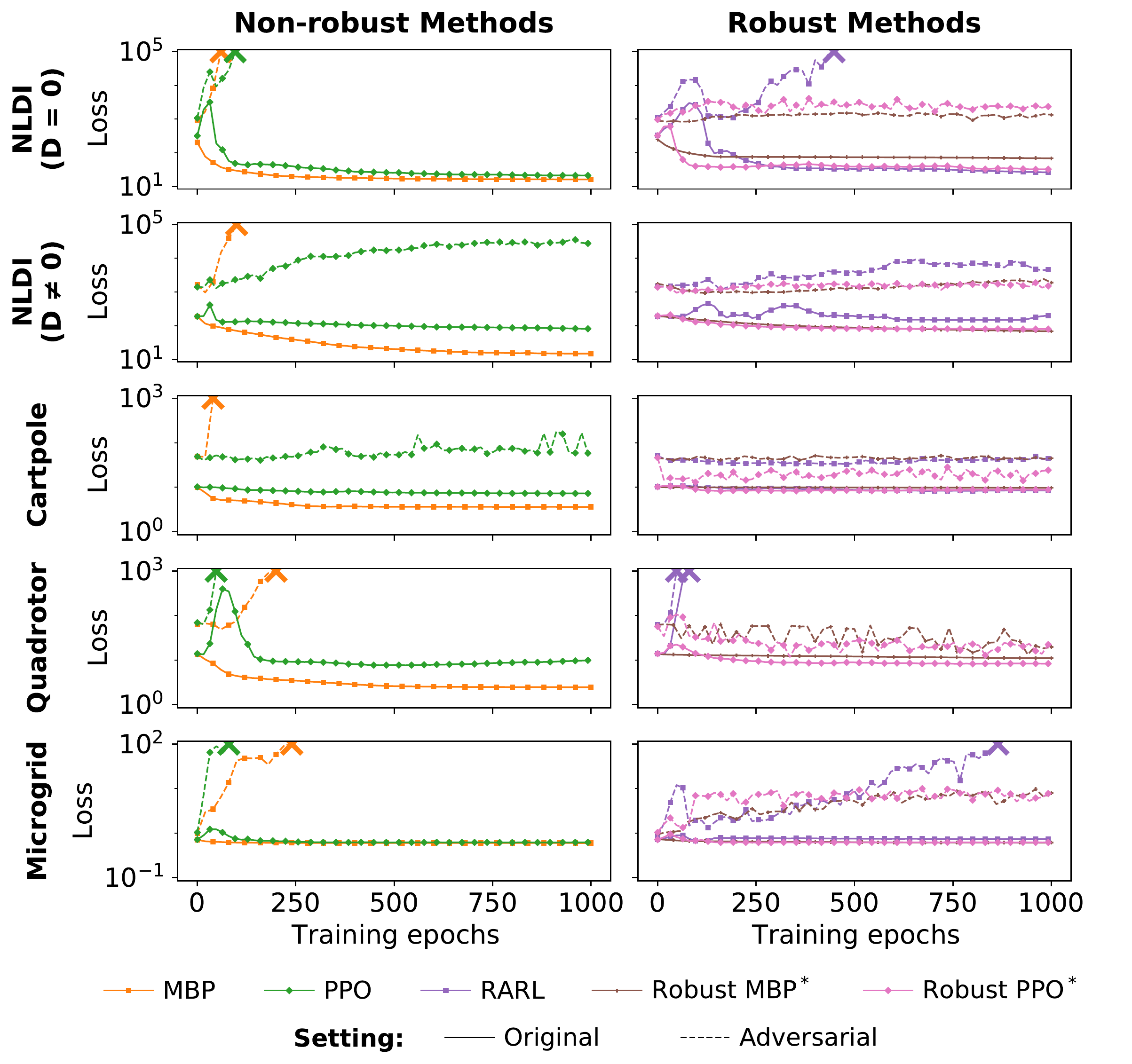}
    \vspace{-8pt}
    \caption{Test performance over training epochs for all learning methods employed in our experiments.
    For each training epoch (10 updates for the MBP model and 18 for PPO), we report average quadratic loss over 50 episodes, and use ``X'' to indicate cases where the relevant method became unstable. (Lower loss is better.)
    Our robust methods (denoted by $^*$), unlike the non-robust methods and RARL, remain stable under adversarial dynamics throughout training.}
    \vspace{-15pt}
    \label{fig:main_results}
\end{figure}

Having instantiated our general framework, we demonstrate the power of our approach on a variety of simulated control domains.\footnote{Code for all experiments is available at \url{https://github.com/locuslab/robust-nn-control}} 
In particular, we evaluate performance on the following metrics:
\begin{itemize}
    \item \textbf{Average-case performance:} How well does the method optimize the performance objective (i.e., LQR cost) under average (non-worst case) dynamics?
    \item \textbf{Worst-case stability:} Does the method remain stable even when subjected to adversarial (worst-case) dynamics?
\end{itemize}

In all cases, we show that our method is able to improve performance over traditional robust controllers under average conditions, while still guaranteeing stability under worst-case conditions.%

\subsection{Description of dynamics settings}
\label{sec:exper-descr}
We evaluate our approach on five NLDI settings:~two synthetic NLDI domains, the cart-pole task, a quadrotor domain, and a microgrid domain.
(Additional experiments for PLDI and $H_\infty$ control settings are described in Appendix~\ref{appsec:pldi-hinf-experiments}.) 
For each setting, we choose a time discretization based on the speed at which the system evolves, and run each episode for 200 steps over this discretization.
In all cases except the microgrid setting, we use a randomly generated LQR objective where the matrices $Q^{1/2}$ and $R^{1/2}$ are drawn i.i.d.~from a standard normal distribution.

\textbf{Synthetic NLDI settings.}
We generate NLDIs of the form~\eqref{eq:nldi-dyn} with $s=5$, $a=3$, and $d=k=2$ by generating matrices $A, B, G, C$ and $D$ i.i.d.~from normal distributions, and producing the disturbance $w(t)$ using a randomly-initialized neural network (with its output scaled to satisfy the norm-bound on the disturbance).
We investigate settings both where $D \neq 0$ and where $D = 0$.
In both cases, episodes are run for 2 seconds at a discretization of 0.01 seconds.

\textbf{Cart-pole.}
In the cart-pole task, our goal is to balance an inverted pendulum resting on top of a cart by exerting horizontal forces on the cart. 
For our experiments, we linearize this system as an NLDI with $D \neq 0$ (see Appendix~\ref{appsec:cartpole}), and add a small additional randomized disturbance satisfying the NLDI bounds.
Episodes are run for 10 seconds at a discretization of 0.05 seconds.

\textbf{Planar quadrotor.}
In this setting, our goal is to stabilize a quadcopter in the two-dimensional plane by controlling the amount of force provided by the quadcopter's right and left thrusters. 
We linearize this system as an NLDI with $D = 0$ (see Appendix~\ref{appsec:quadrotor}), and add a small disturbance as in the cart-pole setting.
Episodes are run for 4 seconds at a discretization of 0.02 seconds.

\textbf{Microgrid.} In this final setting, we aim to stabilize a microgrid by controlling a storage device and a solar inverter. We augment the system given in~\cite{lam2016frequency} with LQR matrices and NLDI bounds (see Appendix~\ref{appsec:microgrid}). Episodes are run for 2 seconds at a discretization of 0.01 seconds.

\subsection{Experimental setup} %
We demonstrate our approach by constructing a robust policy class~\eqref{eq:robust-policy-class} for each of these settings, and optimizing this policy class via different approaches.
Specifically, we construct a nominal nonlinear control policy class as $\hat{\pi}_\theta(x)  = Kx + \tilde{\pi}_\theta(x)$, where $K$ is obtained via robust LQR optimization~\eqref{eq:lqr-linear-opt}, and where $\tilde{\pi}_\theta(x)$ is a feedforward neural network. 
To construct the projections $\mathcal{P}_{\mathcal{C}}$, we employ the value of $P$ obtained when solving for $K$.
For the purposes of demonstration, we then optimize our robust policy class $\pi_\theta(x) = \mathcal{P}_{\mathcal{C}}(\hat{\pi}_\theta(x))$ using two different methods:
\begin{itemize}
    \item \textbf{Robust MBP} (ours)\textbf{:} A model-based planner that assumes the true dynamics are known.
    \item \textbf{Robust PPO} (ours)\textbf{:} An RL approach based on PPO \citep{schulman2017proximal} that does not assume known dynamics (beyond the bounds used to construct the robust policy class).
\end{itemize}
Robust MBP is optimized using gradient descent for 1,000 updates, where each update samples 20 roll-outs.
Robust PPO is trained for 50,000 updates, where each update samples 8 roll-outs; we choose the model that performs best on a hold-out set of initial conditions during training.
We note that while we use PPO for our demonstration, our approach is agnostic to the particular method of training, and can be deployed with many different (deep) RL paradigms.

We compare our robust neural network-based method against the following baselines:
\begin{itemize}
    \item \textbf{Robust LQR:} Robust (linear) LQR controller obtained via Equation~\eqref{eq:lqr-linear-opt}.
    \item \textbf{Robust MPC:} A robust model-predictive control algorithm \citep{kothare1996robust} based on state-dependent LMIs. (As the relevant LMIs are not always guaranteed to solve, our implementation temporarily reverts to the Robust LQR policy when that occurs.)
    \item \textbf{RARL:} The robust adversarial reinforcement learning algorithm~\citep{pinto2017robust}, which trains an RL agent in the presence of an adversary. (We note that unlike the other robust methods considered here, this method is not \emph{provably} robust.)
    \item \textbf{LQR:} A standard non-robust (linear) LQR controller.
    \item \textbf{MBP} and \textbf{PPO:} The non-robust neural network policy class $\hat{\pi}_\theta(x)$ optimized via a model-based planner and the PPO algorithm, respectively.
\end{itemize}
%
In order to evaluate performance, we \emph{train} all methods on the dynamical settings described in Section~\ref{sec:exper-descr}, and \emph{evaluate} them on two different variations of the dynamics:
\begin{itemize}
    \item \textbf{Original dynamics:} The dynamical settings described above (``average case'').
    \item \textbf{Adversarial dynamics:} Modified dynamics with an adversarial test-time disturbance $w(t)$ generated to maximize loss (``worst case''). We generate this disturbance separately for each method described above (see Appendix~\ref{appsec:adv-disturb} for more details).
\end{itemize}

Initialization states are randomly generated for all experiments. 
For the synthetic NLDI and microgrid settings, these are generated from a standard normal distribution.
For both cart-pole and quadrotor, because our NLDI bounds model linearization error, we must generate initial points within a region where this linearization holds.
In particular, the linearization bounds only hold for a specified $L_{\infty}$ ball, $B_{\text{NLDI}}$, around the equilibrium. We use a simple heuristic to construct this ball and jointly find a smaller $L_{\infty}$ ball, $B_{\text{init}}$, such that there exists a level set $L$ of the Robust LQR Lyapunov function with $B_{\text{init}} \subseteq L \subseteq B_{\text{NLDI}}$ 
(details in Appendix~\ref{appsec:exper-details}).
Since Robust LQR (and by extension our methods) are guaranteed to decrease the relevant Lyapunov function, this guarantees that these methods will never leave $B_{\text{NLDI}}$ when initialized starting from any point inside $B_{\text{init}}$ -- i.e., that our NLDI bounds will always hold throughout the trajectories produced by these methods.

\subsection{Results}
Table~\ref{tab:results} shows the performance of the above methods. 
We report the integral of the quadratic loss over the prescribed time horizon on a test set of states, or indicate cases where the relevant method became unstable (i.e., the loss became orders of magnitude larger than for other approaches). (Sample trajectories for these methods are also provided in Appendix~\ref{appsec:exper-details}.)

These results illustrate the basic advantage of our approach. In particular, both our Robust MBP and Robust PPO methods show \textbf{improved ``average-case'' performance over the other provably robust methods} (namely, Robust LQR and Robust MPC).
As expected, however, the non-robust LQR, MBP, and PPO methods often perform better within the original nominal dynamics, as they are optimizing for expected performance but do not need to consider robustness under worst-case scenarios.
\begin{table}
\begin{center}
\vspace{-10pt}
\begin{small}
\begin{tabular}{ r  r | c c c !{\color{gray}\vrule} c c c c c}
\multicolumn{2}{c|}{\multirowcell{2}{\textbf{Environment}}} & \multirowcell{2}{\textbf{LQR}} & \multirowcell{2}{\textbf{MBP}} & \multirowcell{2}{\textbf{PPO}} & \multirowcell{2}{\textbf{Robust} \\ \textbf{LQR}} &
\multirowcell{2}{\textbf{Robust} \\ \textbf{MPC}} &
\multirowcell{2}{\textbf{RARL}} & \multirowcell{2}{\textbf{Robust} \\ \textbf{MBP}$^*$}  & \multirowcell{2}{\textbf{Robust} \\ \textbf{PPO}$^*$}  \\ 
& & & & & \\
\hline
\multirowcell{2}{Generic NLDI \\ ($D = 0$)}
& O & 373 & \textbf{16} & 21 & 253 & 253 & \textbf{27} & 69 & 33 \\
& A & \multicolumn{3}{c !{\color{gray}\vrule}}{--------- \emph{unstable} ---------} & 1009 & 873 & \emph{unstable} & 1111 & 2321 \\
\hline
\multirowcell{2}{Generic NLDI \\ ($D \neq 0$)}
& O & 278 & \textbf{15} & 82 & 199 & 199 & 147 & \textbf{69} & 80 \\
& A & \multicolumn{3}{c !{\color{gray}\vrule}}{--------- \emph{unstable} ---------} & 1900 & 1667 & \emph{unstable} & 1855 & 1669 \\
\hline
\multirow{2}{*}{Cart-pole}
& O & 36.3 & \textbf{3.6} & 7.2 & 10.2 & 10.2 & \textbf{8.3} & 9.7 & 8.4 \\
& A & \multicolumn{2}{c}{--- \emph{unstable} ---} & 172.1 & 42.2 & 47.8 & 41.2 & 50.0 & 16.3 \\
\hline
\multirow{2}{*}{Quadrotor}
& O & 5.4 & \textbf{2.5} & 7.7 & 13.8 & 13.8 & 12.2 & 11.0 & \textbf{8.3} \\
& A & \emph{unstable} & 545.7 & 137.6 & 64.8 & \emph{unstable}$^{\dagger}$ & 63.1 & 25.7 & 26.5 \\
\hline
\multirow{2}{*}{Microgrid}
& O & 4.59 & \textbf{0.60} & 0.61 & 0.73 & 0.73 & 0.67 & \textbf{0.61} & \textbf{0.61} \\
& A & \multicolumn{3}{c !{\color{gray}\vrule}}{--------- \emph{unstable} ---------} & 0.99 & 0.92 & 2.17 & 7.68 & 8.91 \\
\hline
\end{tabular}
\end{small}
\vspace{3pt}
\caption{Performance of various approaches, both robust (right) and non-robust (left).
We report average quadratic loss over 50 episodes under the original dynamics (O) and under an adversarial disturbance (A).
For the original dynamics (O), the best performance for both non-robust methods and robust methods is in bold (lower loss is better).
Under the adversarial dynamics (A), we seek to observe whether or not methods remain stable; we use ``\emph{unstable}'' to indicate cases where the relevant method becomes unstable (and $^{\dagger}$ to denote any instabilities due to numerical, rather than theoretical, issues).
Our robust methods (denoted by $^*$) improve performance over Robust LQR and Robust MPC in the average case while remaining stable under adversarial dynamics, whereas the non-robust methods and RARL either go unstable or receive much larger losses.}
\vspace{-9pt}
\label{tab:results}
\end{center}
\end{table}
However, when we apply allowable adversarial perturbations (that still respect our disturbance bounds), the non-robust LQR, MBP, and PPO approaches diverge or perform very poorly.
Similarly, the RARL agent performs well under the original dynamics, but diverges under adversarial perturbations in the generic NLDI settings.
In contrast, both of our provably robust approaches (as well as Robust LQR) \textbf{remain stable under even ``worst-case'' adversarial dynamics}.
(We note that the baseline Robust MPC method goes unstable in one instance, though this is due to numerical instability issues, rather than issues with theoretical guarantees.)

Figure~\ref{fig:main_results} additionally shows the performance of all neural network-based methods on the test set over training epochs. While the robust and non-robust MBP and PPO approaches both converge quickly to their final performance levels, both non-robust versions become unstable under the adversarial dynamics very early in the process.
The RARL method also frequently destabilizes during training.
Our Robust MBP and PPO policies, on the other hand, \textbf{remain stable throughout the \emph{entire} optimization process}, i.e., do not destabilize during either training \emph{or} testing.
Overall, these results show that our method is able to learn policies that are more expressive than traditional robust 
methods, while \emph{guaranteeing} these policies will be stable under the same conditions as Robust LQR.


\section{Conclusion}
In this paper, we have presented a class of nonlinear control policies that combines the expressiveness of neural networks with the provable stability guarantees of traditional robust control.
This policy class entails projecting the output of a neural network onto a set of stabilizing actions, parameterized via robustness specifications from the robust control literature, and can be optimized using a model-based planning algorithm if the dynamics are known or virtually any RL algorithm if the dynamics are unknown.
We instantiate our general framework for dynamical systems characterized by several classes of linear differential inclusions that capture many common robust control settings.
In particular, this entails deriving efficient, differentiable projections for each setting, via implicit differentiation techniques.
We show over a variety of simulated domains that our method improves upon traditional robust LQR techniques while, unlike non-robust LQR and neural network methods, remaining stable even under worst-case allowable perturbations of the underlying dynamics. 

We believe that our approach highlights the possible connections between traditional control methods and (deep) RL methods.  Specifically, by enforcing more structure in the classes of deep networks we consider, it is possible to produce networks that \emph{provably} satisfy many of the constraints that have typically been thought of as outside the realm of RL.  We hope that this work paves the way for future approaches that can combine more structured uncertainty or robustness guarantees with RL, in order to improve performance in settings traditionally dominated by classical robust control.

\newpage
\subsubsection*{Acknowledgments}
This work was supported by the Department of Energy Computational Science Graduate Fellowship (DE-FG02-97ER25308), the Center for Climate and Energy Decision Making through a cooperative agreement between the National Science Foundation and Carnegie Mellon University (SES-00949710), the Computational Sustainability Network, and the Bosch Center for AI. 
This material is based upon work supported by the National Science Foundation Graduate Research Fellowship Program under Grant No. DGE1745016. Any opinions, findings, and conclusions or recommendations expressed in this material are those of the author(s) and do not necessarily reflect the views of the National Science Foundation.

We thank Vaishnavh Nagarajan, Filipe de Avila Belbute Peres, Anit Sahu, Asher Trockman, Eric Wong, and anonymous reviewers for their feedback on this work. 


\bibliography{references}
\bibliographystyle{unsrtnat}

\clearpage
\newpage
\appendix
\numberwithin{equation}{section}
\numberwithin{figure}{section}
\numberwithin{table}{section}
\numberwithin{theorem}{section}

\section{Details on robust control specifications}
\label{appsec:robust-control-lmis}

As described in Section~\ref{sec:robust-control-specifications},
for many dynamical systems of the form~\eqref{eq:ldi-dyn}, it is possible to specify a set of linear, time-invariant policies guaranteeing infinite-horizon exponential stability via a set of LMIs.
Here, we derive the LMI~\eqref{eq:nldi-lmi} provided in the main text for the NLDI system~\eqref{eq:nldi-dyn}, and additionally describe relevant LMI systems for systems characterized by polytopic linear differential inclusions (PLDIs) and for $H_\infty$ control settings. 

\subsection{Exponential stability in NLDIs}
\label{appsec:nldi-constraint}

Consider the general NLDI system~\eqref{eq:nldi-dyn}. 
We seek to design a time-invariant control policy $u(t) = K x(t)$ and a quadratic Lyapunov function $V(x) = x^T P x$ with $P \succ 0$ for this system that satisfy the exponential stability criterion $\dot{V}(x) \leq -\alpha V(x),\;\forall t$.
We derive an LMI characterizing such a controller and Lyapunov function, closely following and expanding upon the derivation provided in~\cite{boyd1994linear}. 

Specifically, consider the NLDI system~\eqref{eq:nldi-dyn}, reproduced below:
\begin{align}
\dot{x} = A x + Bu + Gw, \;\;  \|w\|_2 \leq \|Cx+Du\|_2.
\end{align} 

The time derivative of this Lyapunov function along the trajectories of the closed-loop system is
\begin{equation}
\begin{split}
	\dot{V}(x) &= \dot{x}^T P x + x^T P \dot{x} \\
	&= (A x + Bu + Gw)^T P x + x^T P (A x + Bu + Gw) \\
	&=((A + BK)x+ Gw )^{T} P x+x^{T} P((A + BK) x+G w) \\
	&=\begin{bmatrix}
		{x} \\
		{w}
	\end{bmatrix}^{T}\begin{bmatrix}
		(A + BK)^{T} P +P(A + BK) & {PG} \\
		{G^{T} P} & {0}
	\end{bmatrix}\begin{bmatrix}
		{x} \\
		{w}
	\end{bmatrix}.%
\end{split}
\end{equation}
The exponential stability condition $\dot{V}(x) \leq -\alpha V(x)$ is thus implied by inequality
\begin{equation}
\begin{bmatrix}
{x} \\
{w}
\end{bmatrix}^{T} M_1 \begin{bmatrix}
{x} \\
{w}
\end{bmatrix}
:=\begin{bmatrix}
{x} \\
{w}
\end{bmatrix}^{T} 
\begin{bmatrix}
{(A+B K)^{T} P+P(A+B K)+\alpha P} & {P G} \\
{G^{T} P} & {0}
\end{bmatrix} \begin{bmatrix}
{x} \\
{w}
\end{bmatrix} \leq 0.
\end{equation}
Additionally, the norm bound on $w$ can be equivalently expressed as
\begin{equation}
\begin{bmatrix}
{x} \\
{w}
\end{bmatrix}^{T}
M_2 \;
\begin{bmatrix}
{x} \\
{w}
\end{bmatrix}
:= 
\begin{bmatrix}
{x} \\
{w}
\end{bmatrix}^{T}
\begin{bmatrix}
{(C+D K)^{T}(C+D K)} & {0} \\
{0} & {-I}
\end{bmatrix}
\begin{bmatrix}
{x} \\
{w}
\end{bmatrix} \geq 0.
\end{equation}
Using  the  S-procedure, it follows that for some $\lambda  \geq 0$,  the  following  matrix  inequality  is  a  sufficient  condition  for  exponential stability:
\begin{equation} \label{eq: matrix inequality lqr norm bounded ldi}
M_1 + \lambda M_2 \preceq 0.
\end{equation}
Using Schur Complements, this matrix inequality is equivalent to
\begin{equation}
(A+B K)^{T} P+P(A+B K)+\alpha P +\lambda(C+D K)^{T}(C+D K)+\frac{1}{\lambda} P G G^{T} P \preceq 0.
\end{equation}
Left- and right-multiplying both sides by \(P^{-1}\), and making the change of variables \(S=P^{-1}\), \(Y=KS\), and \(\mu=1 / \lambda,\) we obtain
\begin{equation}
S A^T +A S +Y^T B^T + B Y +\alpha S +\frac{1}{\mu}\left(S C^T+Y^T D^T\right)\left(C S+D Y\right)+\mu G G^T \preceq 0.
\end{equation}
Using Schur Complements again on this inequality, we obtain our final system of linear matrix inequalities as
\begin{equation}
\begin{bmatrix} 
AS + SA^T + \mu G G^T + BY + Y^TB^T + \alpha S & SC^T + Y^TD^T \\
CS + DY & -\mu I
\end{bmatrix}
\preceq 0, \;\; S \succ 0, \;\; \mu > 0,
\label{appeq:nldi-lmi}
\end{equation} 
where then $K = YS^{-1}$ and $P = S^{-1}$.
Note that the first matrix inequality is homogeneous; we can therefore assume $\mu=1$ (and therefore, $\lambda=1$), without loss of generality. 

\subsection{Exponential stability in PLDIs}
Consider the setting of polytopic linear differential inclusions (PLDIs), where the dynamics are of the form
\begin{equation}
\label{appeq:pldi-dyn}
\dot{x}(t) = A(t)x(t) + B(t)u(t), \;\; \left(A(t),B(t)\right) \in \mathrm{Conv}\{(A_1,B_1), \ldots, (A_L,B_L)\}.
\end{equation}
Here, $A(t) \in \mathbb{R}^{s \times s}$ and $B(t) \in \mathbb{R}^{s \times a}$ can vary arbitrarily over time, as long as they lie in the convex hull (denoted $\mathrm{Conv}$) of the set of points above, where $A_{i} \in \mathbb{R}^{s \times s}, B_{i} \in \mathbb{R}^{s \times a}$ for $i = 1, \ldots, L$.

We seek to design a time-invariant control policy $u(t) = K x(t)$ and quadratic Lyapunov function $V(x) = x^T P x$ with $P \succ 0$ for this system that satisfy the exponential stability criterion $\dot{V}(x) \leq -\alpha V(x),\;\forall t$.
Such a controller and Lyapunov function exist if there exist $S \in \mathbb{R}^{s \times s} \succ 0$ and $Y \in \mathbb{R}^{a \times s}$ such that
\begin{equation}
\label{appeq:pldi-lmi}
A_{i}S + B_{i} Y + S A_{i}^T  + Y^T B_{i}^T + \alpha S \preceq 0, \;\; \forall i=1,\ldots,L,
\end{equation}
where then $K = Y S^{-1}$ and $P = S^{-1}$. The derivation of this LMI follows similarly to that for exponential stability in NLDIs, and is well-described in~\cite{boyd1994linear}.

\subsection{$H_\infty$ control}

Consider the following $H_\infty$ control setting with linear time-invariant dynamics
\begin{equation}
\label{appeq:hinfty-dyn}
\dot{x}(t) = Ax(t) + Bu(t) + Gw(t), \;\; w \in \mathcal{L}_2,
\end{equation}
where $A$, $B$, and $G$ are time-invariant as for the NLDI case, and where we define $\mathcal{L}_2$ as the set of time-dependent signals with finite $\mathcal{L}_2$ norm.\footnote{The $\mathcal{L}_2$ norm of a time-dependent signal $w(t) \colon [0,\infty) \to \mathbb{R}^d$ is defined as $\sqrt{\int_{0}^{\infty} \|w(t)\|_2^2 dt}$.}

In cases such as these with larger or more unstructured disturbances, it may not be possible to guarantee asymptotic convergence to an equilibrium. In these cases, our goal is to construct a robust controller with bounds on the extent to which disturbances affect some performance output (e.g., LQR cost), as characterized by the $\mathcal{L}_2$ gain of the disturbance-to-output map.
Specifically, we consider the stability requirement that this $\mathcal{L}_2$ gain be bounded by some parameter $\gamma > 0$ when disturbances are present, and that the system be exponentially stable in the disturbance-free case. 
This requirement can be characterized via the condition that for all $t$ and some $\sigma  \geq 0$,
\begin{equation}
\label{eq:hinf-stab-appenddix}
    \mathcal{E}(x,\dot{x}, u) := \dot{V}(x) +\alpha V(x)+\sigma \left(x^T Q x + u^T R u - \gamma^2 \|w\|_2^2 \right)  \leq 0.
\end{equation}

We note that 
when $\mathcal{E}(x(t),\dot{x}(t), u(t)) \leq 0$ for all $t$, both of our stability criteria are met. To see this, note that integrating both sides of \eqref{eq:hinf-stab-appenddix} from $0$ to $\infty$ and ignoring the non-negative terms on the left hand side after integration yields
	\begin{equation} 
	\begin{split}
	\label{eq: Hinf bound}
	\int_{0}^{\infty} (x(t)^T Q x(t) \!+\! u(t)^T R u(t)) dt &\leq \gamma^2 \int_{0}^{\infty} \|w(t)\|_2^2 dt + (1/\sigma)V(x(0)).
	\end{split}
	\end{equation}
	%
	This is precisely the desired bound on the $\mathcal{L}_2$ gain of the disturbance-to-output map (see \cite{khalil2002nonlinear}).
	We also note that in the disturbance-free case, substituting $w=0$ into \eqref{eq:hinf-stab-appenddix} yields
	\begin{equation}
	\begin{split}
	    \dot{V}(x) \leq -\alpha V(x) - \sigma \left(x^T Q x + u^T R u \right) \leq -\alpha V(x),
	\end{split}
	\end{equation}
	where the last inequality follows from the non-negativity of the LQR cost; this is precisely our condition for exponential stability.

We now seek to design a time-invariant control policy $u(t) = Kx(t)$ and quadratic Lyapunov function $V(x) = x^T P x$ with $P \succ 0$ that satisfies the above condition. 
In particular, we can write
%
\begin{equation}
\mathcal{E}\left(x(t),(A+BK)x(t)+Gw(t),Kx(t)\right)\! = \!\begin{bmatrix}
x(t) \\ w(t)
\end{bmatrix}^T M_1 \begin{bmatrix}
x(t) \\ w(t)
\end{bmatrix},
\end{equation}
where%
\begin{equation}
\label{eq:hinfty-lmi}
M_1:= \begin{bmatrix} 
(A + BK)^T P + P (A + BK) + \alpha P + \sigma(Q + K^T R K) & PG  \\
G^T P & -\gamma^2 \sigma I
\end{bmatrix}.
\end{equation}
Therefore, we seek to find a $P \in \mathbb{R}^{s \times s} \succ 0$ and $K \in \mathbb{R}^{s \times a}$ that satisfy $M_1 \preceq 0$, for some design parameters $\alpha > 0$ and $\sigma > 0$. 
Using Schur complements, the matrix inequality $M_1 \preceq 0$ is equivalent to
\begin{align}\label{eq:hinfty-mi}
   (A + BK)^T P + P (A + BK) + \alpha P + \sigma(Q + K^T R K)+PGG^T P/(\gamma^2 \sigma) \preceq 0. 
\end{align}
As in Appendix \ref{appsec:nldi-constraint}, we left- and right-multiply both sides by \(P^{-1}\), and make the change of variables \(S=P^{-1}\), \(Y=KS\), and \(\mu=1 / \sigma\) to obtain
\begin{equation*}
S A^T +A S +Y^T B^T + B Y +\alpha S +\frac{1}{\mu}\left( (SQ^{1/2})(Q^{1/2}S) + (Y^T R^{1/2})(R^{1/2} Y) \right) +\mu G G^T/\gamma^2 \preceq 0.
\end{equation*}
Using Schur Complements again, we obtain the LMI
\begin{equation}
\label{eq:hinf-lmi}
    \begin{bmatrix}
    S A^T +A S +Y^T B^T + B Y +\alpha S +\mu GG^T/\gamma^2 & \begin{bmatrix} SQ^{1/2} & Y^T R^{1/2} \end{bmatrix} \\ \begin{bmatrix} Q^{1/2} S \\ R^{1/2} Y \end{bmatrix} & -\mu I
    \end{bmatrix} \preceq 0, \;\; S \succ 0, \;\; \mu > 0,
\end{equation}
where then $K = YS^{-1}$, $P = S^{-1}$, and $\sigma=1/\mu$.

\section{Derivation of sets of stabilizing policies and associated projections}
\label{appsec:proofs}

We describe the construction of the set of actions $\mathcal{C}(x)$, defined in Equation~\eqref{eq:general-set}, for PLDI systems~\eqref{appeq:pldi-dyn} and $H_\infty$ control settings~\eqref{appeq:hinfty-dyn}.
(The relevant formulations for the NLDI system~\eqref{eq:nldi-dyn} are described in the main text.)

\subsection{Exponential stability in PLDIs}
For the general PLDI system~\eqref{appeq:pldi-dyn}, relevant sets of exponentially stabilizing actions ${\mathcal{C}_{\text{PLDI}}}$ are given by the following theorem.

\newcommand{\pldithm}{Consider the PLDI system~\eqref{appeq:pldi-dyn}, some stability parameter $\alpha > 0$, and a Lyapunov function $V(x) = x^TPx$ with $P$ satisfying~\eqref{appeq:pldi-lmi}. Assuming $P$ exists, define
\begin{equation*}
{\mathcal{C}_{\text{PLDI}}(x)} := \left\{ u \in \mathbb{R}^a \mid  \begin{bmatrix} 2x^T P B_1 \\ 2x^T P B_2 \\ \vdots \\ 2x^T P B_L \end{bmatrix} u \leq 
- \begin{bmatrix} x^T (\alpha P + 2P A_1) x \\ x^T (\alpha P + 2P A_2) x \\ \vdots \\ x^T (\alpha P + 2P A_L) x \end{bmatrix} \right\}
\end{equation*}
for all states $x \in \mathbb{R}^s$. For all $x$, ${\mathcal{C}_{\text{PLDI}}(x)}$ is a non-empty set of actions that satisfy the exponential stability condition~\eqref{eq:exp-stab}. Further, ${\mathcal{C}_{\text{PLDI}}(x)}$ is a convex set in $u$.}

\begin{theorem}
\label{thm:pldi-set-app}
\pldithm
\end{theorem}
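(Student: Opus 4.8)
The plan is to mirror the proof of Theorem~\ref{thm:nldi-set} but for the simpler PLDI dynamics, where there is no disturbance $w$ and the only uncertainty is the convex combination parameterizing $(A(t),B(t))$. First I would write down the defining condition: $\mathcal{C}_{\text{PLDI}}(x)$ should be the set of $u$ such that $\dot V(x) \le -\alpha V(x)$ for \emph{every} admissible realization $\dot x = A(t)x + B(t)u$ with $(A(t),B(t)) \in \mathrm{Conv}\{(A_i,B_i)\}$. Since $\dot V(x) = \dot x^T P x + x^T P \dot x = 2 x^T P (A(t)x + B(t)u)$, the stability requirement is $2x^T P(A(t)x + B(t)u) \le -\alpha x^T P x$ for all admissible $(A(t),B(t))$.

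The key observation is that the left-hand side is \emph{affine} (in fact linear) in the pair $(A(t),B(t))$, so its supremum over a convex polytope is attained at a vertex. Concretely, $\sup_{(A(t),B(t))} 2x^T P(A(t)x + B(t)u) = \max_{i=1,\dots,L} 2x^T P(A_i x + B_i u)$. Therefore the worst-case stability condition holds if and only if $2x^T P A_i x + 2 x^T P B_i u \le -\alpha x^T P x$ for every $i = 1,\dots,L$, which upon rearranging is exactly the system of $L$ linear inequalities $2x^T P B_i\, u \le -x^T(\alpha P + 2 P A_i)x$ displayed in the statement. This establishes that every $u \in \mathcal{C}_{\text{PLDI}}(x)$ satisfies~\eqref{eq:exp-stab} along all trajectories.

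For non-emptiness, I would invoke the LMI~\eqref{appeq:pldi-lmi}: given $S \succ 0$, $Y$ satisfying those inequalities, set $K = YS^{-1}$, $P = S^{-1}$. Left- and right-multiplying $A_iS + B_iY + SA_i^T + Y^TB_i^T + \alpha S \preceq 0$ by $P$ and substituting $Y = KS$ gives $(A_i + B_iK)^T P + P(A_i + B_iK) + \alpha P \preceq 0$ for each $i$; hence for any $x$, $2x^T P(A_i + B_iK)x + \alpha x^T P x \le 0$, i.e. $u = Kx$ satisfies all $L$ inequalities, so $Kx \in \mathcal{C}_{\text{PLDI}}(x)$ and the set is non-empty. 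Convexity is immediate: the set is an intersection of $L$ halfspaces in $u$ (for fixed $x$ the coefficients $2x^TPB_i$ and right-hand sides are constants), hence a convex polyhedron.

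I do not anticipate a serious obstacle here — the proof is essentially a routine adaptation of the NLDI argument with the ``supremum of a linear functional over an $L_2$ ball'' step replaced by ``maximum of a linear functional over a polytope equals the max over its vertices.'' The only point requiring a sentence of care is justifying that it suffices to check the vertices $(A_i,B_i)$ rather than all of $\mathrm{Conv}\{(A_i,B_i)\}$; this follows because any $(A(t),B(t))$ in the convex hull is a convex combination $\sum_i \lambda_i (A_i,B_i)$, so $2x^TP(A(t)x+B(t)u) = \sum_i \lambda_i\, 2x^TP(A_ix+B_iu) \le \max_i 2x^TP(A_ix+B_iu)$, and conversely the vertices are themselves admissible.
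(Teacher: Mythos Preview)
Your proposal is correct and follows essentially the same approach as the paper: expand $\dot V(x)$ using the convex-combination representation of $(A(t),B(t))$, reduce the worst-case condition to the $L$ vertex inequalities, invoke the LMI~\eqref{appeq:pldi-lmi} to exhibit $u=Kx$ for non-emptiness, and note convexity from linearity in $u$. Your treatment is slightly more explicit than the paper's (you spell out the left/right multiplication by $P$ for non-emptiness and state the ``if and only if'' direction of the vertex reduction), but the argument is the same.
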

\begin{proof}
We seek to find a set of actions such that the condition~\eqref{eq:exp-stab} is satisfied along $\emph{all}$ possible trajectories of~\eqref{appeq:pldi-dyn}, i.e., for $\emph{any}$ allowable instantiation of $(A(t), B(t))$. 
A set of actions satisfying this condition at a given $x$ is given by
\begin{equation*}
    \mathcal{C}_{\text{PLDI}}(x) := \{ u \in \mathbb{R}^a \mid \dot{V}(x) \leq - \alpha V(x) \;\; \forall (A(t),B(t)) \in \mathrm{Conv}\{(A_1,B_1), \ldots, (A_L,B_L) \}.
\end{equation*}

Expanding the left side of the inequality above, we see that for some coefficients $\gamma_i \in \mathbb{R} \geq 0, i = 1, \ldots, L$ satisfying $\sum_{i=1}^L \gamma_i(t) = 1$,
\begin{equation*}
\begin{split}
\dot{V}(x) \; &= \; \dot{x}^T P x + x^T P \dot{x} \; = \; 2 x^T P \left( A(t)x+B(t)u \right)\\
&= 2 x^T P \left( \sum_{i = 1}^{L}\gamma_i(t)A_i x+ \gamma_i(t) B_i u \right) \;
= \; \sum_{i=1}^{L} \gamma_i \left( 2 x^T P \left(A_i x+ B_i u \right) \right)
\end{split}
\end{equation*}
by definition of the PLDI dynamics and of the convex hull.
Thus, if we can ensure 
\begin{equation*}
 2 x^T P \left(A_i x+ B_i u \right) \leq -\alpha V(x) = -\alpha x^T P x,\; \forall i = 1, \ldots, L,
\end{equation*}
then we can ensure that exponential stability holds.
Rearranging this condition and writing it in matrix form yields an inequality of the desired form.
We note that by definition of the specifications~\eqref{appeq:pldi-lmi}, there is some $K$ corresponding to $P$ such that the policy $u = Kx$ satisfies all of the above inequalities; thus, $Kx \in {\mathcal{C}_{\text{PLDI}}(x)}$, and ${\mathcal{C}_{\text{PLDI}}(x)}$ is non-empty. Further, as the above inequality represents a linear constraint in $u$, this set is convex in $u$.
\end{proof}

We note that the relevant projection $\mathcal{P}_{\mathcal{C}_{\text{PLDI}}(x)}$ represents a projection onto an intersection of halfspaces, and can thus be implemented via differentiable quadratic programming \citep{amos2017optnet}.

\subsection{$H_\infty$ control}
\label{appsec:hinf-proj-sets}

For the $H_\infty$ control system~\eqref{appeq:hinfty-dyn}, relevant sets of actions satisfying the condition~\eqref{eq:hinf-stab-appenddix} are given by the following theorem.

\begin{theorem}
Consider the system~\eqref{appeq:hinfty-dyn}, some stability parameter $\alpha > 0$, and a Lyapunov function $V(x) = x^TPx$ with $P$ satisfying Equation~\eqref{eq:hinf-lmi}. Assuming $P$ exists, define
\begin{equation*}
\mathcal{C}_{H_{\infty}}(x) := \left\{ u \in \mathbb{R}^a \mid u^T R u + (2B^T P x)^T u + x^T \left(PA \!+ \! A^T P \!+\! \alpha P \!+\! Q\!+\!\gamma^{-2} P G G^T P\right) x \leq 0 \right\}
\end{equation*}
for all states $x \in \mathbb{R}^s$. For all $x$, $\mathcal{C}_{H_{\infty}}(x)$ is a non-empty set of actions that guarantee condition~\eqref{eq:hinf-stab-appenddix}, i.e., that the $\mathcal{L}_2$ gain of the disturbance-to-output map is bounded by $\gamma$ and that the system is exponentially stable in the disturbance-free case. Further, $\mathcal{C}_{H_{\infty}}(x)$ is convex in $u$.
\end{theorem}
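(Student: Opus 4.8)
# Proof Proposal for the $H_\infty$ Control Theorem

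The plan is to mirror the structure of the NLDI and PLDI proofs (Theorem~\ref{thm:nldi-set}, Theorem~\ref{thm:pldi-set-app}), since the underlying mechanics are essentially the same: we want to characterize the set of actions $u$ that force the pointwise inequality $\mathcal{E}(x,\dot{x},u) \leq 0$ from Equation~\eqref{eq:hinf-stab-appenddix} to hold at a fixed state $x$, for the worst-case disturbance $w$. First I would substitute the closed-loop derivative $\dot{x} = Ax + Bu + Gw$ into the definition of $\mathcal{E}$, expand $\dot{V}(x) = 2x^TP(Ax+Bu+Gw)$, and collect terms. This gives $\mathcal{E}$ as an expression that is quadratic in $w$ with a negative-definite leading term $-\sigma\gamma^2\|w\|_2^2$ (assuming $\sigma > 0$; the boundary case $\sigma = 0$ collapses to the NLDI-style analysis and can be handled separately or absorbed by continuity). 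The key step is then to maximize over $w$: because the $w$-dependence is $-\sigma\gamma^2\|w\|_2^2 + 2x^TPGw$ plus $w$-independent terms, this is a concave quadratic in $w$ whose unconstrained maximizer is $w^\star = \frac{1}{\sigma\gamma^2}G^TPx$, yielding a maximum value that contributes the term $\gamma^{-2}x^TPGG^TPx$ (after dividing through by $\sigma$). Note $w$ here ranges over $\mathcal{L}_2$, so this is genuinely an unconstrained pointwise maximization — cleaner than the NLDI norm-ball case.

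After performing this maximization and dividing the entire inequality by $\sigma > 0$, I would collect the remaining terms: the $u^TRu$ term (from the $\sigma u^TRu$ piece), the cross term $2x^TPBu$ (from $\dot V$) together with the linear-in-$u$ contribution, and the purely quadratic-in-$x$ terms $x^T(PA + A^TP + \alpha P + Q)x$ plus the $\gamma^{-2}x^TPGG^TPx$ term just produced. Writing the linear-in-$u$ coefficient as $2B^TPx$ and grouping exactly matches the stated form of $\mathcal{C}_{H_\infty}(x)$. This is the bulk of the routine calculation and should go through by direct algebra, with care taken that the $\sigma$ factors cancel correctly and that the cross term $2x^TPBu$ appears once (not doubled) — one subtlety is that $\dot V$ contributes $2x^TPBu$ while the $\sigma K^TRK$ structure in $M_1$ from Equation~\eqref{eq:hinfty-lmi} is already accounted for by the $\sigma u^TRu$ term when we allow general $u$ rather than $u = Kx$.

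For non-emptiness: by construction $P$ satisfies the LMI~\eqref{eq:hinf-lmi}, which via the Schur-complement chain in Appendix~\ref{appsec:nldi-constraint}'s analogue is equivalent to the matrix inequality~\eqref{eq:hinfty-mi}, i.e. $M_1 \preceq 0$ with $u = Kx$. Tracing back, this says precisely that the controller $u = Kx$ (with $K = YS^{-1}$) makes $\mathcal{E}(x,(A+BK)x+Gw,Kx) \leq 0$ for all $w$, hence $Kx$ lies in $\mathcal{C}_{H_\infty}(x)$ for every $x$, so the set is non-empty. For convexity: the defining inequality is $u^TRu + (2B^TPx)^Tu + (\text{const in } u) \leq 0$ with $R \succ 0$, which is a convex quadratic constraint in $u$ (sublevel set of a convex function), hence $\mathcal{C}_{H_\infty}(x)$ is convex — in fact an ellipsoid (or empty, but we just showed it is non-empty). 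The main obstacle I anticipate is purely bookkeeping: keeping the $\sigma$-scaling, the factor-of-two conventions on the cross terms, and the sign of the $w^\star$-substitution all consistent so that the final grouped inequality matches the theorem statement verbatim; there is no conceptual difficulty beyond what already appears in the NLDI proof, since the $\mathcal{L}_2$ disturbance makes the inner maximization unconstrained and explicitly solvable.
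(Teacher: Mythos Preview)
Your approach is essentially identical to the paper's: expand $\mathcal{E}$ along $\dot x = Ax+Bu+Gw$, maximize the resulting concave quadratic in $w$ to obtain $w^\star = G^TPx/(\sigma\gamma^2)$, substitute back, then invoke $u=Kx$ for non-emptiness and $R\succ 0$ for convexity (ellipsoid). One bookkeeping correction: dividing the post-maximization inequality through by $\sigma$ does \emph{not} produce the stated form, since the $2x^TPBu$, $PA+A^TP+\alpha P$, and $PGG^TP/(\sigma\gamma^2)$ terms do not all scale with $\sigma$; the paper simply leaves $\sigma$ in the expression (its Equation~\eqref{appeq:c-hinf}) and the theorem as stated corresponds to $\sigma=1$.
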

\begin{proof}
We seek to find a set of actions such that the condition $\mathcal{E}(x,\dot{x},u) \leq 0$ is satisfied along \emph{all} possible trajectories of \eqref{appeq:hinfty-dyn}, where $\mathcal{E}$ is defined as in \eqref{eq:hinf-stab-appenddix}. 
A set of actions satisfying this condition at a given $x$ is given by
\begin{equation*}
    \mathcal{C}_{H_\infty}(x) := \{ u \in \mathbb{R}^a \mid \sup_{w \in \mathcal{L}_2} \;\; \mathcal{E}(x,\dot{x},u) \leq 0, \ \dot{x}=Ax+Bu+Gw \}.
\end{equation*}
To begin, we note that
\begin{align*}
    \mathcal{E}(x,Ax+Bu+Gw,u) = &\;x^T P (Ax+Bu+Gw)+(Ax+Bu+Gw)^T P x +\alpha x^T P x \\ &+\sigma \left(x^T Q x + u^T R u - \gamma^2 \|w\|_2^2 \right) 
\end{align*}
We then maximize $\mathcal{E}$ over $w$:  
\begin{align}
w^\star = \arg\max_{w} \ \mathcal{E}(x,Ax+Bu+Gw,u) = G^T P x/(\sigma\gamma^{2}).
\end{align}
Therefore, 
\begin{align}
\mathcal{C}_{H_{\infty}}(x) = \{u \mid \mathcal{E}(x,Ax+Bu+Gw^\star,u,w^\star) \leq 0\}.
\end{align}
Expanding and rearranging terms, this becomes
\begin{align}
\mathcal{C}_{\mathcal{H}_\infty}(x) \!=\! \{u \mid u^T (\sigma R) u + (2B^T P x)^T u + x^T \left(PA \!+ \! A^T P \!+\! \alpha P \!+\! \sigma Q\!+\! P G G^T P/(\sigma\gamma^{2})\right) x \leq 0\}.
\label{appeq:c-hinf}
\end{align}

We note that by definition of the specifications~\eqref{eq:hinf-lmi}, there is some $K$ corresponding to $P$ such that the policy $u  = Kx$ satisifies the conditions above (see \eqref{eq:hinfty-mi}); thus, $Kx \in \mathcal{C}_{H_\infty}$, and $\mathcal{C}_{H_\infty}$ is non-empty.
We note further that $\mathcal{C}_{\mathcal{H}_\infty}$ is an ellipsoid in the control action space, and is thus convex in $u$.
\end{proof}

We rewrite the set  $\mathcal{C}_{H_\infty}(x)$ such that the projection $\mathcal{P}_{\mathcal{C}_{H_\infty}(x)}$ can be viewed as a second-order cone projection, in order to leverage our fast custom solver (Appendix~\ref{appsec:admm-solver}). 
In particular, defining $\tilde{P} = \sigma R$, $\tilde{q} = B^T P x$, and $\tilde{r} = x^T \left(PA \!+ \! A^T P \!+\! \alpha P \!+\! \sigma Q\!+\! P G G^T P/(\sigma\gamma^{2})\right) x$, we can rewrite the ellipsoid above as 
\begin{align}
\mathcal{C}_{H_\infty}(x) = \{u \mid u^\top \tilde P u +2\tilde q^\top u + \tilde r \leq 0\}.
\end{align}
We note that as $\tilde P \succ 0$ and $\tilde r-\tilde q^\top \tilde P^{-1} \tilde q<0$, this ellipsoid is non-empty (see, e.g., section B.1 in \cite{boyd2004convex}). We can then rewrite the ellipsoid as
\begin{equation}
\begin{split}
\mathcal{C}_{H_\infty}(x)
= \{u \mid \| \tilde{A} u+\tilde{b} \|_2 \leq 1 \}
\end{split}
\end{equation}
where $\tilde{A} = \sqrt{\frac{\tilde P}{\tilde q^\top \tilde P^{-1}\tilde q-\tilde{r}}}$ and $\tilde{b} = \sqrt{\frac{P}{q^\top P^{-1}q-r}} P^{-1}q$.
The constraint $\| \tilde{A} u+\tilde{b} \|_2 \leq 1$ is then a second-order cone constraint in $u$.

\section{A fast, differentiable solver for second-order cone projection}
\label{appsec:admm-solver}

In order to construct the robust policy class described in Section~\ref{sec:method} for the general NLDI system~\eqref{eq:nldi-dyn} and the $H_\infty$ setting~\eqref{appeq:hinfty-dyn}, we must project a nominal (neural network-based) policy onto the second-order cone constraints described in Theorem~\ref{thm:nldi-set} and Appendix~\ref{appsec:hinf-proj-sets}, respectively. 
As this projection operation does not necessarily have a closed form, we implement it via a custom differentiable optimization solver.

More generally, consider a set of the form
\begin{equation}
    \mathcal{C} = \{ x \in \mathbb{R}^n \mid  \|A x + b \|_2 \leq c^T x + d  \}
\end{equation}
for some $A \in \mathbb{R}^{m \times n}$, $b \in \mathbb{R}^{m}$, $c \in \mathbb{R}^{n}$, and $d \in \mathbb{R}$. 
Given some input $y \in \mathbb{R}^n$, we seek to compute the second-order cone projection $\mathcal{P}_{\mathcal{C}}(y)$ by solving the problem
\begin{equation}
\begin{split}
    \minimize_{x \in \mathbb{R}^n}&\;\; \frac{1}{2}\|x - y\|^2_2 \\
    \subjectto&\;\; \|A x + b \|_2 \leq c^T x + d.
\end{split}
\label{appeq:proj-socp-base-problem}
\end{equation}

Let $\mathcal{F}$ denote the $\ell_2$ norm cone, i.e., $\mathcal{F} := \{ (w, t) \mid \|w\|_2 \leq t \}$. Introducing the auxiliary variable $z \in \mathbb{R}^{m+1}$, we can then rewrite the above optimization problem equivalently as
\begin{equation}
\label{appeq:socp-2}
\begin{split}
     \minimize_{x \in \mathbb{R}^n, \; z \in \mathbb{R}^{m+1}}&\;\; \frac{1}{2}\|x - y\|^2_2 + \mathbf{1}_{\mathcal{F}}(z) \\
    \subjectto&\;\; z = \begin{bmatrix} Ax + b \\ c^T x + d \end{bmatrix} =: Gx + h,
\end{split}
\end{equation}
where for brevity we define $G = \begin{bmatrix} A \\ c^T \end{bmatrix}$ and $h = \begin{bmatrix} b \\ d \end{bmatrix}$, and where $\mathbf{1}_{\mathcal{F}}$ denotes the indicator function for membership in the set $\mathcal{F}$.

We describe our fast solution technique for computing this projection, as well as our method for obtaining gradients through the solution.

\subsection{Computing the projection}
We construct a fast solver for problem~\eqref{appeq:socp-2} using an accelerated projected dual gradient method. 
Specifically, define $\mu = \mathbb{R}^{m+1}$ as the dual variable on the equality constraint in Equation~\eqref{appeq:socp-2}. 
The Lagrangian for this problem can then be written as
\begin{equation}
    \mathscr{L}(x,z,\mu) = \frac{1}{2}\|x - y\|_2^2 + \mathbf{1}_{\mathcal{F}}(z) + \mu^T (z - Gx - h),
\label{appeq:socp-lagrangian}
\end{equation}
and the dual problem is given by $\max_{\mu} \min_{x,z}\mathscr{L}(x,z,\mu)$. To form the dual problem, we minimize the Lagrangian with respect to $x$ and $z$ as
\begin{equation}
\inf_{x,z} \mathscr{L}(x,z,\mu) = \inf_{x} \ \frac{1}{2} \left\{\| x - y \|_2^2-\mu^T G x \right\}  +\inf_{z} \{\mu^T z +  \mathbf{1}_{\mathcal{F}}(z) \} - \mu^T h.
\end{equation}

We note that the first term on the right side is minimized at $x^{\star}(\mu)= y + G^T \mu$. Thus, we see that
\begin{equation}
\begin{split}
\inf_{x} \ \frac{1}{2} \{\| x - y \|_2^2-\mu^T G x\} &= -\frac{1}{2} \mu^T G G^T \mu - \mu^T G y.
\end{split}
\end{equation}
For the second term, denote $\mu = (\tilde \mu, s)$ and $z = (\tilde z, t)$. We can then rewrite this term as
\begin{equation} \label{appeq:dual-derivation}
 \inf_{z} \{\mu^T z +  \mathbf{1}_{\mathcal{F}}(z) \} \} = \inf_{t \geq 0} \ \inf_{\tilde{z}} \ \{t\cdot s  + \tilde \mu ^T \tilde z \mid  \|\tilde z\|_2 \leq t\}.
\end{equation}
For a fixed $t \geq 0$, the above objective is minimized at $\tilde z = -t \tilde \mu/\|\tilde \mu \|_2$. (The problem is infeasible for $t < 0$.) Substituting this minimizer into \eqref{appeq:dual-derivation} and minimizing the result over $t \geq 0$ yields
\begin{equation} \label{appeq:dual-derivation-2}
 \inf_{z} \{\mu^T z +  \mathbf{1}_{\mathcal{F}}(z) \} =  \inf_{t \geq 0} \ t(s-\|\tilde \mu \|_2) = - \mathbf{1}_{\mathcal{F}}(\mu)
\end{equation}
where the last identity follows from definition of the second-order cone $\mathcal{F}$. Hence the negative dual problem becomes

\begin{equation}
\minimize_{\mu}\;\; \frac{1}{2} \mu^T G G^T \mu +\mu^T 
(G y+h) + \mathbf{1}_{\mathcal{F}}(\mu).
\end{equation}

We now solve this problem via Nesterov's accelerated projected dual gradient method \citep{nesterov2013introductory}. For notational brevity, define $f(\mu) := \frac{1}{2} \mu^T G G^T \mu +\mu^T (G y+h)$. Then, starting from arbitrary $\mu^{(-1)},\mu^{(0)} \in \mathbb{R}^{m+1}$ we perform the iterative updates
\begin{equation} \label{appeq: fast projected dual gradient}
\begin{split}
\nu^{(k)} &= \mu^{(k)} + \beta^{(k)} (\mu^{(k)}-\mu^{(k-1)}) \\
\mu^{(k+1)}& = \mathcal{P}_{\mathcal{F}}\left(\nu^{(k)} - \frac{1}{L_f}\nabla f(\nu^{(k)})\right),
\end{split}
\end{equation}
where $L_f=\lambda_{\max}(GG^T)$ is the Lipschitz constant of $f$, and $\mathcal{P}_{\mathcal{F}}$ is the projection operator onto $\mathcal{F}$ (which has a closed form solution; see \cite{bauschke1996projection}). Letting $m_f =\lambda_{\min}(GG^T)$ denote the strong convexity constant of $f$, the momentum parameter is then scheduled as \citep{nesterov2013introductory}
\begin{equation}
\beta^k = \begin{cases} \cfrac{k-1}{k+2}& \text{if } m_f=0 \\[10pt] \cfrac{\sqrt{L_f}-\sqrt{m_f}}{\sqrt{L_f}+\sqrt{m_f}}& \text{if} \ m_f > 0. \end{cases}
\end{equation}

After computing the optimal dual variable $\mu^\star$, i.e., the fixed point of \eqref{appeq: fast projected dual gradient}, the optimal primal variable can be recovered via the equation $x^\star = y + G^T \mu^\star$ (as can be observed from the first-order conditions of the Lagrangian~\eqref{appeq:socp-lagrangian}).


\subsection{Obtaining gradients}

In order to incorporate the above projection into our neural network, we need to compute the gradients of all problem variables (i.e., $G$, $h$, and $y$) through the solution $x^\star$.
In particular, we note that $x^\star$ has a direct dependence on both $G$ and $y$, and an indirect dependence on all of $G$, $h$, and $y$ through $\mu^\star$.

To compute the relevant gradients through $\mu^\star$, we apply the implicit function theorem to the fixed point of the update equations~\eqref{appeq: fast projected dual gradient}. Specifically, as these updates imply that $\mu^\star = \nu^\star$, their fixed point can be written as
\begin{equation}
    \mu^\star = \mathcal{P}_{\mathcal{F}} \left( \mu^\star - \frac{1}{L_f}\nabla f(\mu^\star) \right).
\end{equation}

Define $M := \frac{\partial \mathcal{P}_{\mathcal{F}}(\cdot)}{\partial (\cdot) }\big|_{(\cdot) = \mu^\star - \frac{1}{L_f}\nabla f(\mu^\star)},$ and note that $\nabla f(\mu^\star) = GG^T\mu^\star + Gy + h$. The differential of the above fixed-point equation is then given by
\begin{equation}
\dd \mu^{\star} = M \times \left( \dd\mu^{\star} - \frac{1}{L_f} \left(\dd G G^T \mu^{\star} + G \dd G^T \mu^{\star} + G G^T \dd \mu^{\star} + \dd G y + G \dd y + \dd h  \right) \right).
\end{equation}
Rearranging terms to separate the differentials of problem outputs from problem variables, we see that
\begin{equation}
\left(I - M + \frac{1}{L_f} M G G^T \right) \dd \mu^{\star} = -\frac{1}{L_f} M \left(\dd G G^T \mu^{\star} + G \dd G^T \mu^{\star} + \dd G y + G \dd y + \dd h  \right),
\label{appeq:socp-differential}
\end{equation}
where $I$ is the identity matrix of appropriate size.

As described in e.g.~\cite{amos2017optnet}, we can then use these equations to form the Jacobian of $\mu^{\star}$ with respect to any of the problem variables by setting the differential of the relevant problem variable to $I$ and of all other problem variables to $0$; solving the resulting equation for $\dd \mu^{\star}$ then yields the value of the desired Jacobian. 
However, as these Jacobians can be large depending on problem size, we rarely want to form them explicitly.
Instead, given some backward pass vector $\frac{\partial \ell}{\partial \mu^{\star}} \in \mathbb{R}^{1 \times (m+1)}$ with respect to the optimal dual variable, we want to directly compute the gradient of the loss with respect to the problem variables:~e.g., for $y$, we want to directly form the result of the product $\frac{\partial \ell}{\partial \mu^{\star}}\frac{\partial \mu^{\star}}{\partial y} \in \mathbb{R}^{1 \times n}$.
We do this via a similar method as presented in \cite{amos2017optnet}, and refer the reader there for a more in-depth explanation of the method described below.

Define $J := I - M + \frac{1}{L_f} M G G^T$ to represent the coefficient of $\dd \mu^{\star}$ on the left side of Equation~\eqref{appeq:socp-differential}.
Given $\frac{\partial \ell}{\partial \mu^{\star}}$, we then compute the intermediate term 
\begin{equation}
\label{appeq:intermed-term}
\dd_{\mu} := -J^{-T} \left(\frac{\partial \ell}{\partial \mu^{\star}}\right)^T.
\end{equation}
We can then form the relevant gradient terms directly as
\begin{equation}
\begin{split}
\left( \frac{\partial \ell}{\partial \mu^{\star}} \frac{\partial \mu^{\star}}{\partial G} \right)^T &= \frac{1}{L_f} M \left( \dd_{\mu} (G^T \mu^{\star})^T  + \mu^{\star} (G^T \dd_{\mu})^T + \dd_\mu y^T \right) \\[5pt]
\left( \frac{\partial \ell}{\partial \mu^{\star}} \frac{\partial \mu^{\star}}{\partial h} \right)^T &= \frac{1}{L_f} M \dd_\mu \\[5pt]
\left( \frac{\partial \ell}{\partial \mu^{\star}} \frac{\partial \mu^{\star}}{\partial y} \right)^T &= \frac{1}{L_f} G^T M \dd_{\mu}.
\end{split}
\end{equation}

In these computations, we note that as our solver returns $x^\star$, the backward pass vector we are given is actually $\frac{\partial \ell}{\partial x^\star} \in \mathbb{R}^{1 \times n}$; thus, we compute $\frac{\partial \ell}{\partial \mu^\star} = \frac{\partial \ell}{\partial x^\star}\frac{\partial x^\star}{\partial \mu^\star} =  \frac{\partial \ell}{\partial x^\star} G^T$ for use in Equation~\eqref{appeq:intermed-term}.

Accounting additionally for the direct dependence of some of the problem variables on $x^\star$ (recalling that $x^\star = y + G^T u^\star$), the desired gradients are then given by
\begin{equation}
\begin{split}
\left( \frac{\partial \ell}{\partial G} \right)^T &= \left( \frac{\partial \ell}{\partial x^\star} \frac{\partial x^\star}{\partial G} +  \frac{\partial \ell}{\partial x^\star} \frac{\partial x^\star}{\partial u^\star} \frac{\partial u^\star}{\partial G} \right)^T = \mu^\star \frac{\partial \ell}{\partial x^\star} + \frac{1}{L_f} M \left( \dd_{\mu} (G^T \mu^{\star})^T  + \mu^{\star} (G^T \dd_{\mu})^T + \dd_\mu y^T \right) \\[5pt]
\left( \frac{\partial \ell}{\partial h} \right)^T &= \left( \cancelto{0}{\frac{\partial \ell}{\partial x^\star} \frac{\partial x^\star}{\partial h}} +  \frac{\partial \ell}{\partial x^\star} \frac{\partial x^\star}{\partial u^\star} \frac{\partial u^\star}{\partial h} \right)^T = \frac{1}{L_f} M \dd_{\mu} \\[5pt]
\left(\frac{\partial \ell}{\partial y} \right)^T &= \left( \frac{\partial \ell}{\partial x^\star} \frac{\partial x^\star}{\partial y} +  \frac{\partial \ell}{\partial x^\star} \frac{\partial x^\star}{\partial u^\star} \frac{\partial u^\star}{\partial y} \right)^T = \left(\frac{\partial \ell}{\partial x^\star}\right)^T + \frac{1}{L_f} G^T M \dd_\mu.
\end{split}
\end{equation}

\section{Writing the cart-pole problem as an NLDI}
\label{appsec:cartpole}
In the cart-pole task, our goal is to balance an inverted pendulum resting on top of a cart by exerting horizontal forces on the cart.
Specifically, the state of this system is defined as $x = \begin{bmatrix} p_x, \dot{p_x}, \varphi, \dot{\varphi} \end{bmatrix}^T$, where $p_x$ is the cart position and $\varphi$ is the angular displacement of the pendulum from its vertical position; we seek to stabilize the system at $x = \vec{0}$ by exerting horizontal forces $u \in \mathbb{R}$ on the cart.
For a pendulum of length $\ell$ and mass $m_p$, and for a cart of mass $m_c$, the dynamics of the system are (as described in \cite{tedrake2009underactuated}):
\begin{equation}
\label{eq:cartpole-dyn}
\begingroup
\renewcommand*{\arraystretch}{1.5}
\dot{x} = \begin{bmatrix}
    \dot{p_x} \\
    \frac{u + m_p \sin\varphi (\ell \dot{\varphi}^2 - g \cos\varphi)}{m_c + m_p \sin^2\varphi} \\
    \dot{\varphi} \\
    \frac{\strut(m_c+m_p)g\sin\varphi - u\cos\varphi - m_p \ell \dot{\varphi}^2 \cos\varphi \sin\varphi}{\strut l(m_c + m_p \sin^2\varphi)}
\end{bmatrix},
\endgroup
\end{equation}
where $g = 9.81~\si{m/s^2}$ is the acceleration due to gravity. We rewrite this system as an NLDI by defining $\dot{x} = f(x,u)$ and then linearizing the system about its equilibrium point as
\begin{equation}
\label{eq:cartpole-lin}
    \dot{x} = \mathrm{J}_f(0,0) \begin{bmatrix} x \\ u \end{bmatrix} + I_n w, \;\; \|w\| \leq \|Cx + Du \|,
\end{equation}
where 
$\mathrm{J}_f$ is the Jacobian of the dynamics, $w=f(x,u)-\mathrm{J}_f(0,0) \begin{bmatrix} x & u \end{bmatrix}^T$ is the linearization error, and $I_n$ is the $n \times n$ identity matrix. 
We bound this linearization error by numerically obtaining the matrices $C$ and $D$, assuming that $x$ and $u$ are within a neighborhood of the origin.
We describe this process in more detail below.
As a note, while we employ an NLDI here to characterize the linearization error, it is also possible to characterize this error via polytopic uncertainty (see Appendix~\ref{appsec:pldi-lin-error}); we choose to use an NLDI here as it yields a much smaller problem description than a PLDI in this case.

\subsection{Deriving $\mathrm{J}_f(0,0)$}
For $\dot{x} = f(x,u)$, we see that
\begin{equation}
\mathrm{J}_f(x,u) = 
\begin{bmatrix}
0 & 1 & 0 & 0 & 0\\
0 & 0 & \nicefrac{\partial \ddot{p_x}}{\partial \varphi} & \nicefrac{\partial \ddot{p_x}}{\partial \dot{\varphi}} & \nicefrac{\partial \ddot{p_x}}{\partial u} \\
0 & 0 & 0 & 1 & 0 \\
0 & 0 & \nicefrac{\partial \ddot{\varphi}}{\partial \varphi} & \nicefrac{\partial \ddot{\varphi}}{\partial \dot{\varphi}} & \nicefrac{\partial \ddot{\varphi}}{\partial u},
\end{bmatrix},
\end{equation}
where
\begin{small}
\begin{align*}
\frac{\partial \ddot{p_x}}{\partial \varphi} &= \frac{m_p \cos\varphi\left(\dot{\varphi}^{2} l-g \cos\varphi\right)+g m_p \sin ^{2}\varphi}{m_c+m_p \sin ^{2}\varphi}- \frac{2 m_p \sin\varphi \cos\varphi\left(m_p \sin\varphi\left(\dot{\varphi}^{2} l-g \cos\varphi\right)+u\right)}{\left(m_c+m_p \sin ^{2}\varphi\right)^{2}}, \\[10pt]
\frac{\partial \ddot{p_x}}{\partial \dot{\varphi}} &= \frac{2 \dot{\varphi} l m_p \sin \varphi}{m_c+m_p \sin ^{2}\varphi}, \\[10pt]
\frac{\partial \ddot{p_x}}{\partial u} &= \frac{1}{m_c + m_p \sin^2\varphi}, \\[10pt]
\frac{\partial \ddot{\varphi}}{\partial \varphi} &= \frac{\begin{matrix}g(m_c+m_p) \cos\varphi+\dot{\varphi}^{2} l m_p \sin ^{2}\varphi\\\;\;\;\;-\dot{\varphi}^{2} l m_p \cos ^{2}\varphi+u \sin\varphi\end{matrix}}{l\left(m_c+m_p \sin ^{2}\varphi\right)} - \frac{\begin{matrix}2 m_p \sin\varphi \cos\varphi(g(m_c+m_p) \sin\varphi\\\;\;\;-\dot{\varphi}^{2}l m_p \sin\varphi \cos\varphi -u \cos\varphi\end{matrix}}{l\left(m_c+m_p \sin ^{2}\varphi\right)^{2}}, \\[10pt]
\frac{\partial \ddot{\varphi}}{\partial \dot{\varphi}} &= \frac{-2\dot{\varphi}m_p \sin\varphi \cos\varphi}{m_c + m_p \sin^2\varphi}, \\[10pt]
\frac{\partial \ddot{\varphi}}{\partial u} &= \frac{-\cos\varphi}{l(m_c + m_p \sin^2\varphi)}.
\end{align*}
\end{small}

We thus see that 
\begin{equation}
\mathrm{J}_f(0, 0) = 
\begin{bmatrix}
0 & 1 & 0 & 0 & 0\\
0 & 0 & \nicefrac{-m_p g}{m_c} & 0 & \nicefrac{1}{m_c} \\
0 & 0 & 0 & 1 & 0 \\
0 & 0 & \nicefrac{g(m_c + m_p)}{l m_c} & 0 & -\nicefrac{1}{m_c}
\end{bmatrix}.
\end{equation}

\subsection{Obtaining $C$ and $D$}

We then seek to construct matrices $C$ and $D$ that bound the linearization error $w$ between the true dynamics $\dot{x}$ and our first-order linear approximation $\mathrm{J}_f(0,0) \begin{bmatrix} x \\ u \end{bmatrix}$.
To do so, we bound the error of this approximation entry-wise:~that is, for each entry $i=1, \ldots, s$, we want to find $F_i$ such that for all $x$ in some region $\underline{x} \leq x \leq \bar{x}$, and all $u$ in some region $\underline{u} \leq u \leq \bar{u}$,
\begin{equation}
\label{appeq:cart-lin-err}
w_i^2 = \left(\nabla f_i(0) \begin{bmatrix} x \\ u \end{bmatrix} - \dot{x}_i\right)^2 \leq \begin{bmatrix} x \\ u \end{bmatrix}^T F_i \begin{bmatrix} x \\ u \end{bmatrix}.
\end{equation}

Then, given the matrix
\begin{equation}
M = \begin{bmatrix} F_1^{T/2} & F_2^{T/2} & F_3^{T/2} & F_4^{T/2} & F_5^{T/2} & F_6^{T/2} \end{bmatrix}^T
\end{equation}
we can then obtain $C=M_{1:s}$ and $D = M_{s:s+m}$ (where the subscripts indicate column-wise indexing).

We solve separately for each $F_i$ to minimize the difference between the right and left sides of Equation~\eqref{appeq:cart-lin-err} (while enforcing that the right side is larger than the left side) over a discrete grid of points within $\underline{x} \leq x \leq \bar{x}$ and $\underline{u} \leq u \leq \bar{u}$. 
By assuming that $F_i$ is symmetric, we are able to cast this as a linear program in the upper triangular entries of $F_i$.

To obtain the matrices $C$ and $D$ used for the cart-pole experiments in the main paper, we let $\bar{x} = \begin{bmatrix} 1.5 & 2 & 0.2 & 1.5\end{bmatrix}^T$, $\bar{u} = 10$, $\underline{x} = -\bar{x}$, and $\underline{u} = -\bar{u}$. 
As each entry-wise difference in Equation~\eqref{appeq:cart-lin-err} contained exactly three variables (i.e., a total of three entries from $x$ and $u$), we solved each entry-wise linear program over a mesh grid of 50 points per variable.

\section{Writing quadrotor as an NLDI}
\label{appsec:quadrotor}

In the planar quadrotor setting, our goal is to stabilize a quadcopter in the two-dimensional plane by controlling the amount of force provided by the quadcopter’s right and left thrusters.
Specifically, the state of this system is defined as $x = \begin{bmatrix} p_x & p_z & \varphi & \dot{p_x} & \dot{p_z} & \dot{\varphi} \end{bmatrix}^T$, where $(p_x, p_z)$ is the position of the quadcopter in the vertical plane and $\varphi$ is its roll (i.e., angle from the horizontal position); we seek to stabilize the system at $x = \vec{0}$ by controlling the amount of force $u = \begin{bmatrix}u_r, u_l\end{bmatrix}^T$ from right and left thrusters. 
We assume that our action $u$ is additional to a baseline force of $\begin{bmatrix} mg/2 & mg/2 \end{bmatrix}^T$ provided by the thrusters by default to prevent the quadcopter from falling.
For a quadrotor with mass $m$, moment-arm $\ell$ for the thrusters, and moment of inertia $J$ about the roll axis, the dynamics of this system are then given by (as modified from \cite{singh2020learning}):
\begin{equation}
\label{eq:quadrotor-dyn}
    \dot{x} = 
    \begin{bmatrix}
        \dot{p_x} \cos\varphi - \dot{p_z} \sin\varphi \\
        \dot{p_x} \sin\varphi + \dot{p_z} \cos\varphi \\
        \dot{\varphi} \\
        \dot{p_z} \dot{\varphi} - g\sin\varphi \\
        -\dot{p_x} \dot{\varphi} - g\cos\varphi + g \\
        0
    \end{bmatrix}
    +
    \begin{bmatrix}
    0 & 0 \\
    0 & 0 \\
    0 & 0 \\
    0 & 0 \\
    1/m & 1/m \\
    \ell/J & -\ell/J
    \end{bmatrix}
    u,
\end{equation}
where $g = 9.81~\si{m/s^2}$.
We linearize this system via a similar method as for the cart-pole setting, i.e., as in Equation~\eqref{eq:cartpole-lin}. 
We describe this process in more detail below.
We note that since the dependence of the dynamics on $u$ is linear, we have that $D=0$ for our resultant NLDI.
As for cart-pole, while we employ an NLDI here to characterize the linearization error, it is also possible to characterize this error via polytopic uncertainty (see Appendix~\ref{appsec:pldi-lin-error}); we choose to use an NLDI here as it yields a much smaller problem description than a PLDI in this case.

\subsection{Deriving $\mathrm{J}_f(0, 0)$}
For $\dot{x} = f(x,u)$, we see that
\begin{equation}
\mathrm{J}_f(x,u) =
\begin{bmatrix}
0 & 0 & -\dot{p}_x\sin\varphi - \dot{p}_z\cos\varphi & \cos\varphi & -\sin\varphi & 0 & 0\\
0 & 0 & \dot{p}_x\cos\varphi - \dot{p}_z\sin\varphi & \sin\varphi & \cos\varphi & 0 & 0 \\
0 & 0 & 0 & 0 & 0 & 1 & 0\\
0 & 0 & -g\cos\varphi & 0 & \dot{\varphi} & \dot{p}_z & 0\\
0 & 0 & g\sin\varphi & -\dot{\varphi} & 0 & -\dot{p}_x & 0\\
0 & 0 & 0 & 0 & 0 & 0 & 0
\end{bmatrix},
\end{equation}
and thus
\begin{equation}
\mathrm{J}_f(0, 0) = 
\begin{bmatrix}
0 & 0 & 0 & 1 & 0 & 0 & 0\\
0 & 0 & 0 & 0 & 1 & 0 & 0 \\
0 & 0 & 0 & 0 & 0 & 1 & 0\\
0 & 0 & -g & 0 & 0 & 0 & 0\\
0 & 0 & 0 & 0 & 0 & 0 & 0\\
0 & 0 & 0 & 0 & 0 & 0 & 0
\end{bmatrix}.
\end{equation}

\subsection{Obtaining $C$ and $D$}
We obtain the matrices $C$ and $D$ via a similar method as described in Appendix~\ref{appsec:cartpole}, though in practice we only consider the linearization error with respect to $x$ (i.e., since the dynamics are linear with respect to $u$, we have $D=0$). We let $\bar{x} = \begin{bmatrix} 1 & 1 & 0.15 & 0.6 & 0.6 & 1.3 \end{bmatrix}$ and $\underline{x} = -\bar{x}$.
As for cart-pole, each entry wise difference in the equivalent of Equation~\eqref{appeq:cart-lin-err} contained exactly three variables (i.e., a total of three entries from $x$ and $u$), and each entry-wise linear program was solved over a mesh grid of 50 points per variable.

\section{Details on the microgrid setting}
\label{appsec:microgrid}

For our experiments, we build upon the microgrid setting given in~\cite{lam2016frequency}.
In this system, the state $x \in \mathbb{R}^3$ captures voltage deviations, frequency deviations, and the amount of power generated by a diesel generator connected to the grid; the action $u \in \mathbb{R}^2$ describes the current associated with a storage device and a solar PV inverter;  and the disturbance $w \in \mathbb{R}$ describes the difference between the amount of power demanded and the amount of power produced by solar panels on the grid.
The authors also define a performance index $y \in \mathbb{R}^2$ which captures voltage and frequency deviations (i.e., two of the entries of the state $x$). 

To construct an NLDI of the form~\eqref{eq:nldi-dyn} for this system, we directly use the $A$, $B$, and $G$ matrices given in~\cite{lam2016frequency}. 
We generate $C$ i.i.d.~from a normal distribution and let $D = 0$, to represent the fact that the disturbance $w$ and the entries of the state $x$  are correlated, but that $w$ is likely not correlated with the actions $u$. 
Finally, we let $Q$ and $R$ be diagonal matrices with 1 in the entries corresponding to quantities represented in the performance index $y$, and with 0.1 in the rest of the diagonal entries, to emphasize that the variables in $y$ are the most important in describing the performance of the system.

\section{Generating an adversarial disturbance}
\label{appsec:adv-disturb}

In the NLDI settings explored in our experiments, we seek to construct an ``adversarial'' disturbance $w(t)$ that obeys the relevant norm bounds $\|w(t)\|_2 \leq \|Cx(t) + Du(t)\|_2$ while maximizing the loss.
To do this, we use a model predictive control method where the actions taken are $w(t)$.
Specifically, for each policy $\pi$, we model $w(t)$ as a neural network specific to that policy.
Every 10 steps of a roll-out, we optimize $w(t)$ through gradient descent to maximize the loss over a horizon of 40 steps, subject to the constraint $\|w(t)\|_2 \leq \|Cx(t) + Du(t)\|_2$.

\section{Additional experimental details}
\label{appsec:exper-details}

\textbf{Initial states.} To pick initial states in our experiments, for the synthetic settings, we sample each attribute of the state i.i.d. from a standard Gaussian distribution.
For cart-pole and planar quadrotor, we sample uniformly from bounds chosen such that the non-robust LQR algorithm (under the original dynamics) did not go unstable.
For cart-pole, these bounds were chosen to be $p_x \in [-1, 1]$, $\varphi \in [-0.1, 0.1]$, $\dot{p}_x = \dot{\varphi} = 0$.
For planar quadrotor, these bounds were $p_x, p_z \in [-1, 1]$, $\varphi \in [-0.05, 0.05]$, $\dot{p_x} = \dot{p_z} = \dot{\varphi} = 0$.

\textbf{Constructing NLDI bounds.} Given these initial states, for the cart-pole and quadrotor settings, we 
needed to construct our NLDI disturbance bounds such that they would hold over the \emph{entire} trajectory of the robust policy; if not, the robustness specification~\eqref{appeq:nldi-lmi} would not hold, and our agent might in fact increase the Lyapunov function.
To ensure this approximately, we used a simple heuristic:~we ran the (non-robust) LQR agent for a full episode with 50 different starting conditions, and constructed an $L_\infty$ ball around all states reached in any of these trajectories.
We then used these $L_\infty$ balls on the states to construct the matrices $C$ and $D$ for our disturbance bounds, using the procedure described in Appendices~\ref{appsec:cartpole} and~\ref{appsec:quadrotor}.

\textbf{Computing infrastructure and runtime.}
All experiments were run on an XPS 13 laptop with an Intel i7 processor.
The planar quadrotor and synthetic NLDI experiment with $D = 0$ took about 1 day to run (since the projections were simple half-space projections), while all the other synthetic domains and cart-pole took about 3 days to run.
The majority of the run-time was in computing the adversarial disturbances for test-time evaluations.

\textbf{Hyperparameter selection.}
For our experiments, we did not perform large parameter searches.
The learning rate we chose for our model-based planner, (both robust and non-robust) remained constant for the different domains; we tried learning rates of $\num{1e-3}, \num{1e-4}, \num{1e-5}$ and found $\num{1e-3}$ worked best for the non-robust version and $\num{1e-4}$ worked best for the robust version.
For our PPO hyperparameters, we simply used those used in the original PPO paper.

One parameter we had to tune for each environment was the time step.
In particular, we had to pick a time step high enough that we could run episodes for a reasonable total length of time (within which the non-robust agents would go unstable), but low enough to reasonably approximate a continuous-time setting (since, for our robustness guarantees, we assume the agent's actions evolve in continuous time).
Our search space was small, however, consisting of $0.05$, $0.02$, $0.01$, and $0.005$ seconds.

\textbf{Trajectory plots.}
Figure~\ref{fig:cartpole_trajectories} shows sample trajectories of different methods in the cart-pole domain under adversarial dynamics.
The non-robust LQR and model-based planning approaches both diverge and the non-robust PPO doesn't diverge, but doesn't clearly converge after 10 seconds.
The robust methods, on the other hand, all clearly converge after 10 seconds.

\begin{figure}[t]
\begin{subfigure}{0.5\textwidth}
  \centering
  \includegraphics[width=\textwidth]{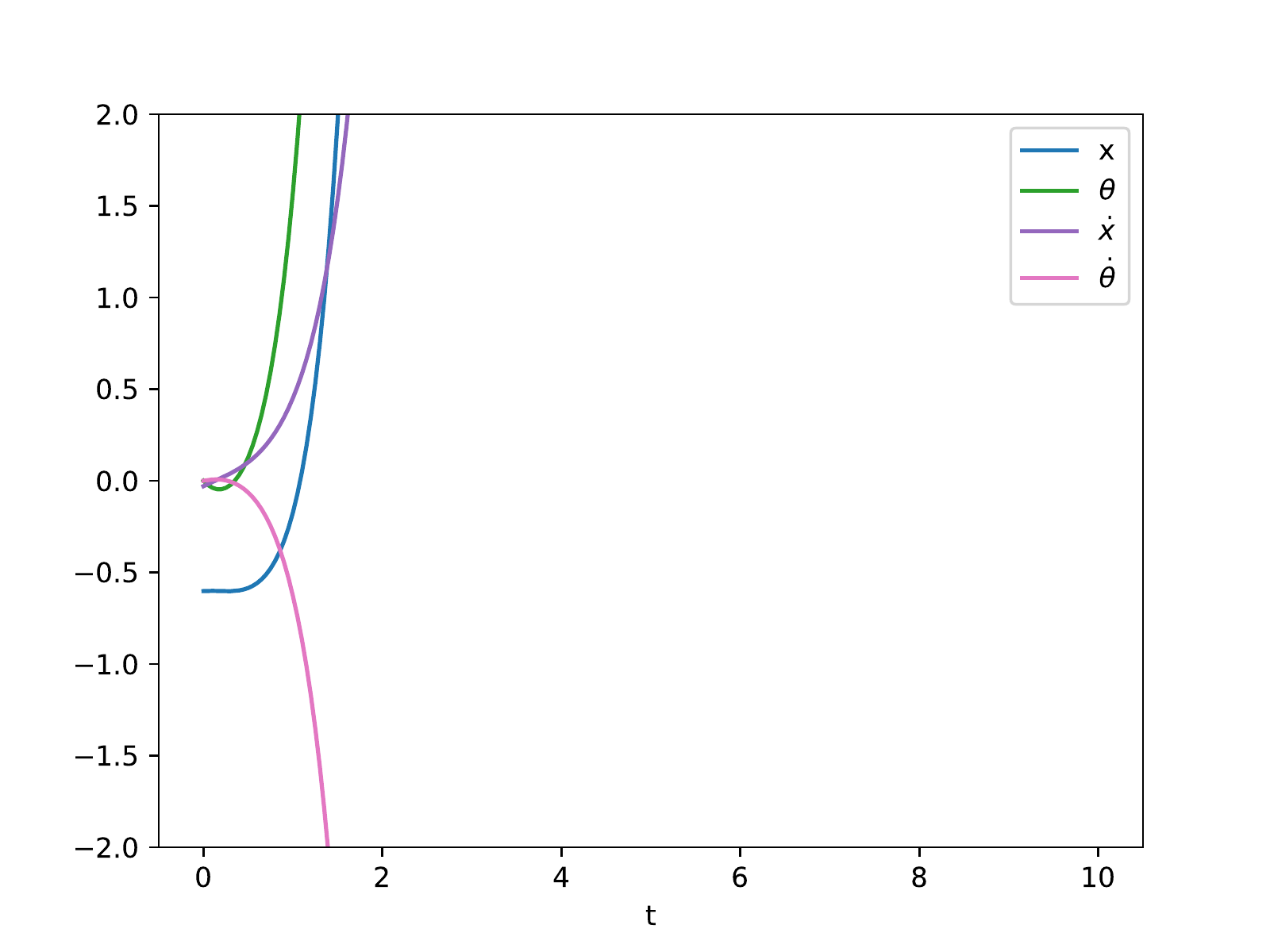}  
  \caption{LQR}
\end{subfigure}
\begin{subfigure}{.5\textwidth}
  \centering
  \includegraphics[width=\textwidth]{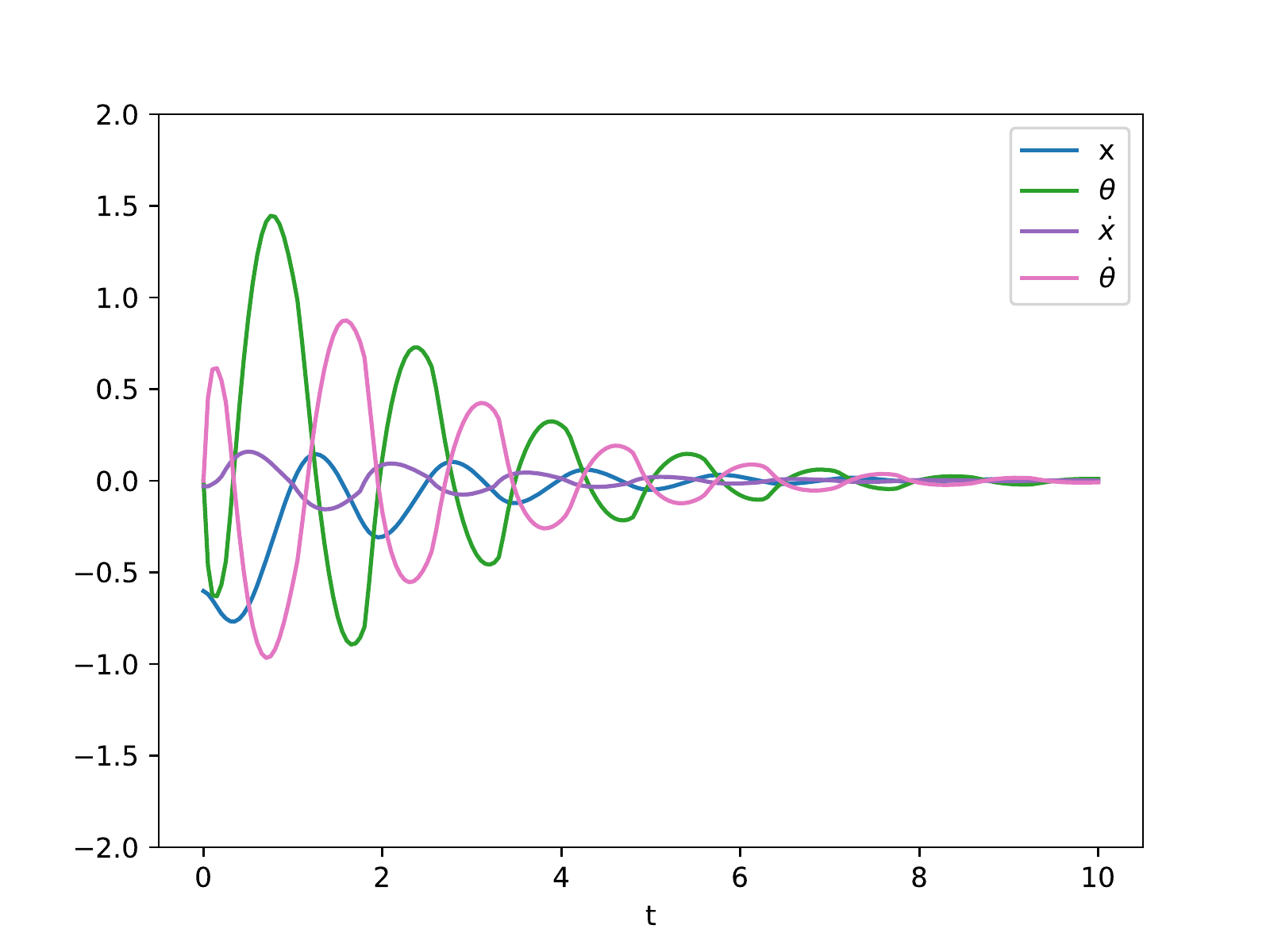}  
  \caption{Robust LQR}
\end{subfigure}
\begin{subfigure}{.5\textwidth}
  \centering
  \includegraphics[width=\textwidth]{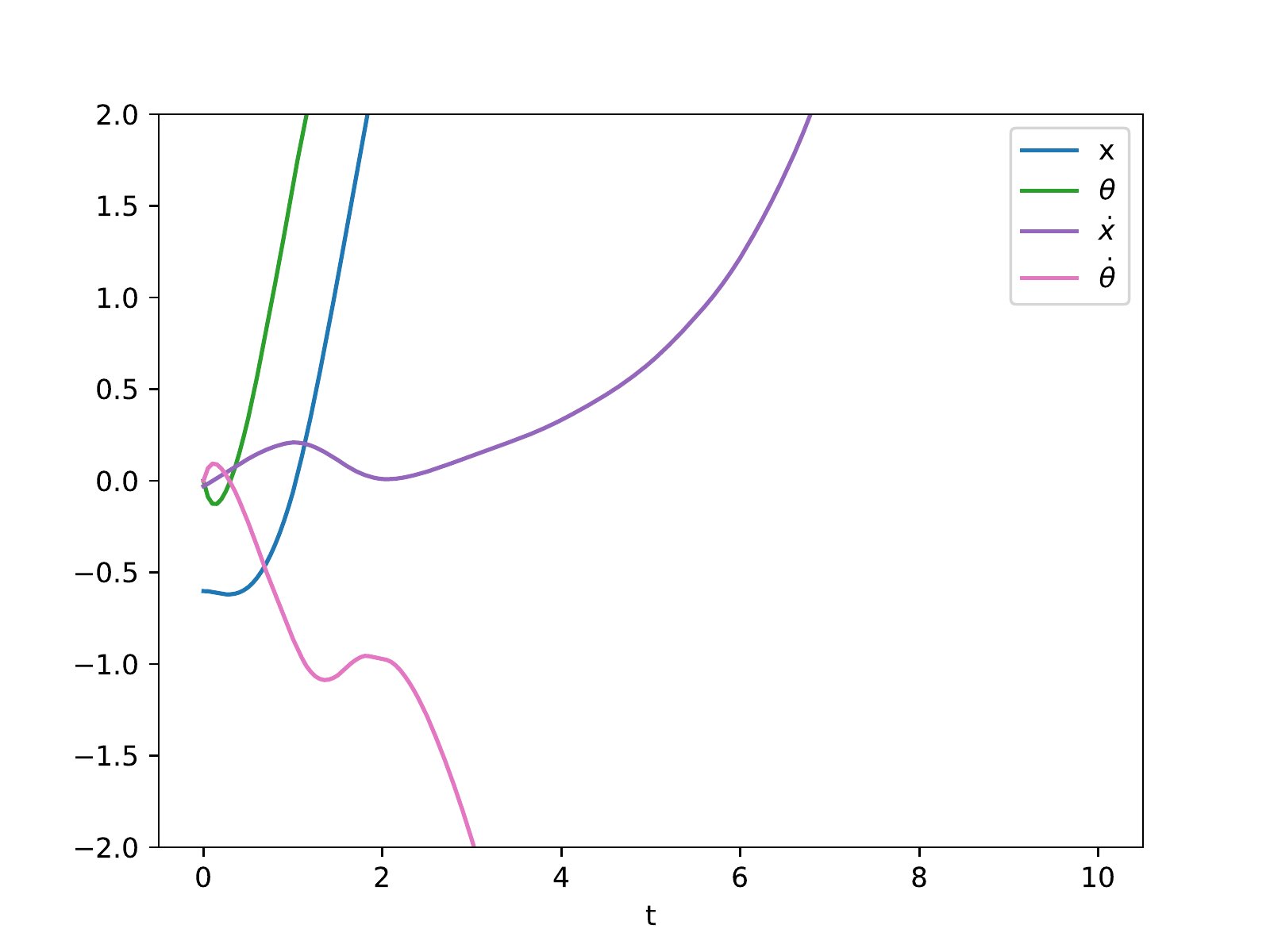}  
  \caption{MBP}
\end{subfigure}
\begin{subfigure}{.5\textwidth}
  \centering
  \includegraphics[width=\textwidth]{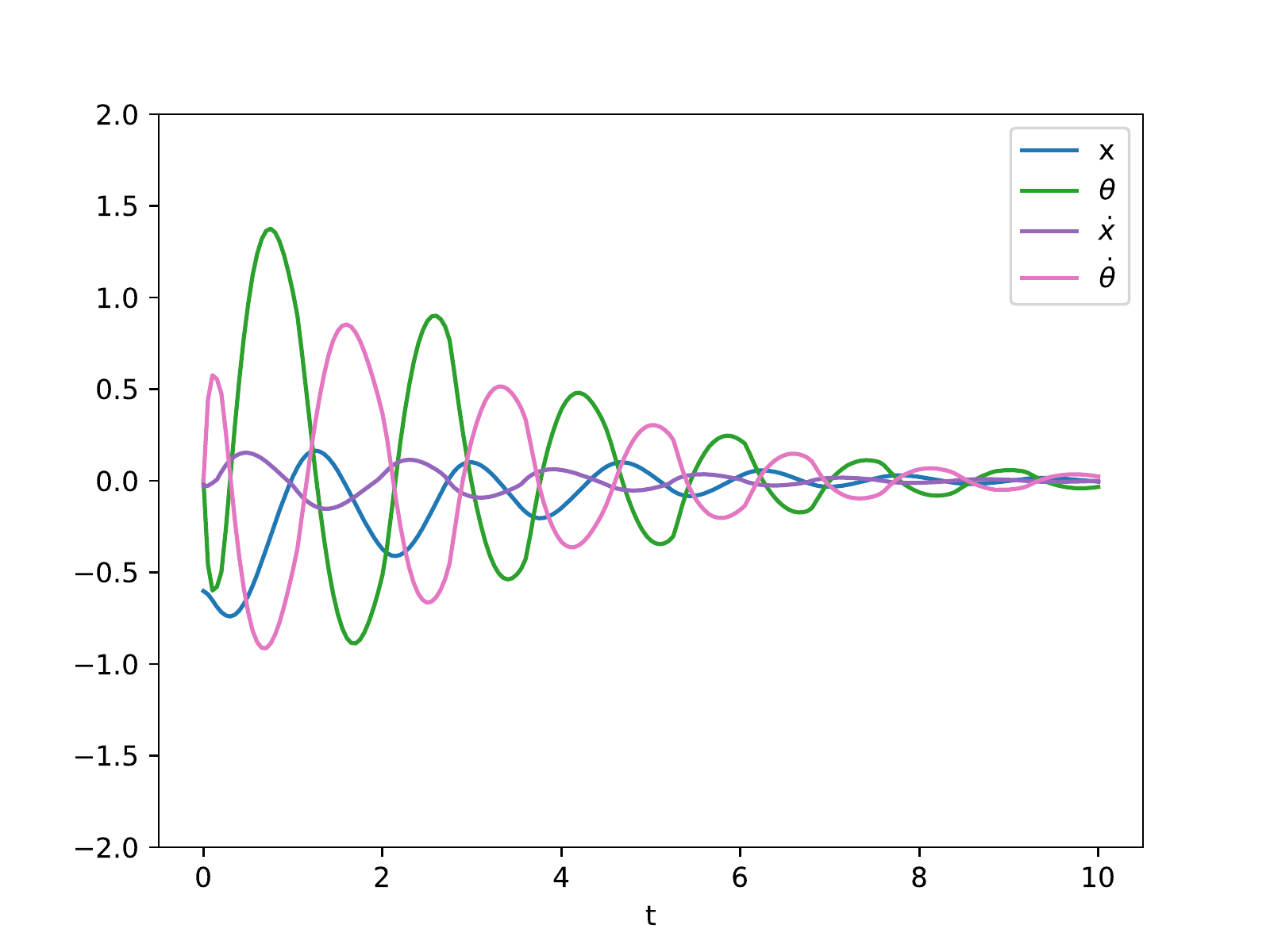}  
  \caption{Robust MBP}
\end{subfigure}
\begin{subfigure}{.5\textwidth}
  \centering
  \includegraphics[width=\textwidth]{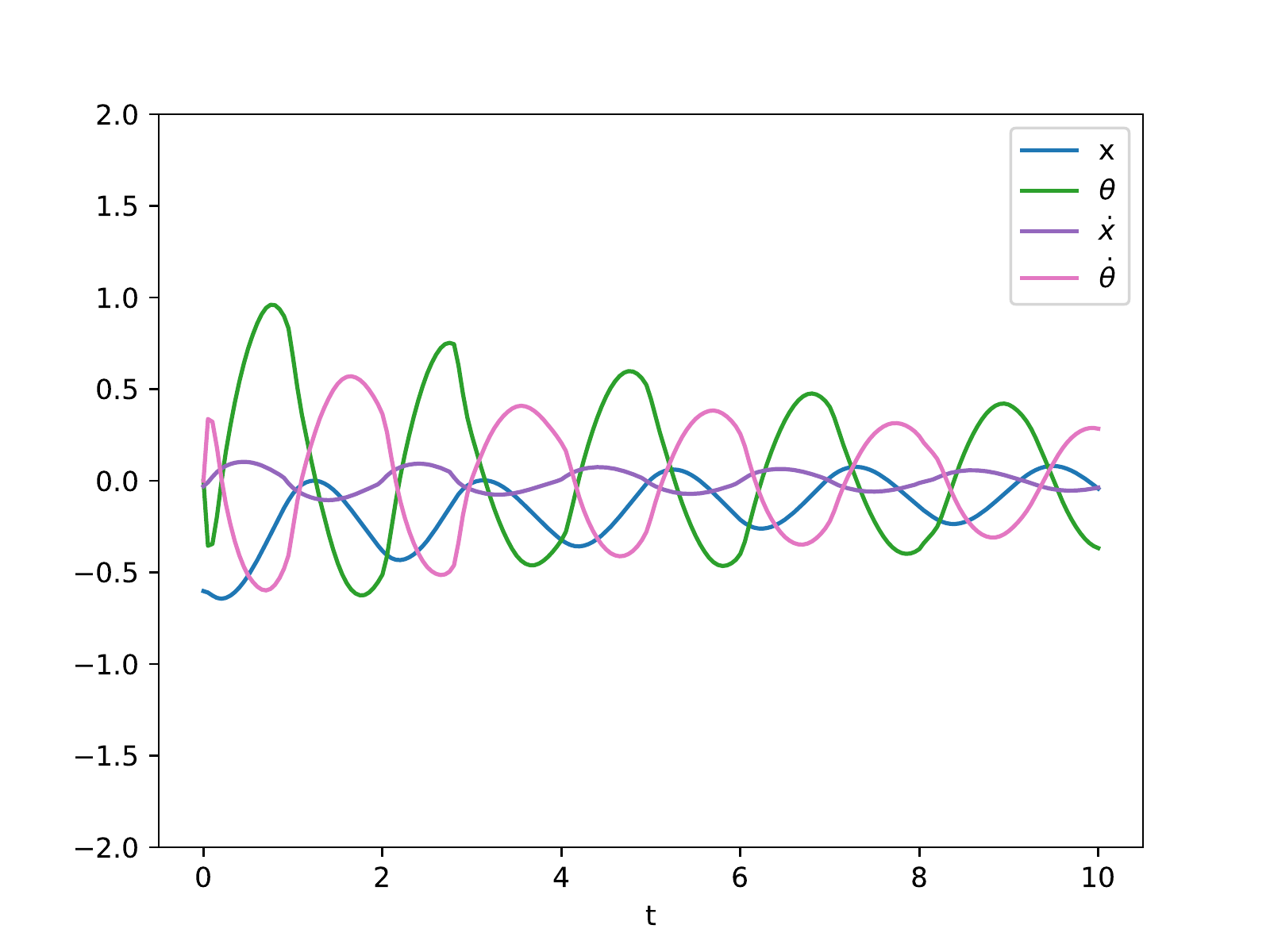}  
  \caption{PPO}
\end{subfigure}
\begin{subfigure}{.5\textwidth}
  \centering
  \includegraphics[width=\textwidth]{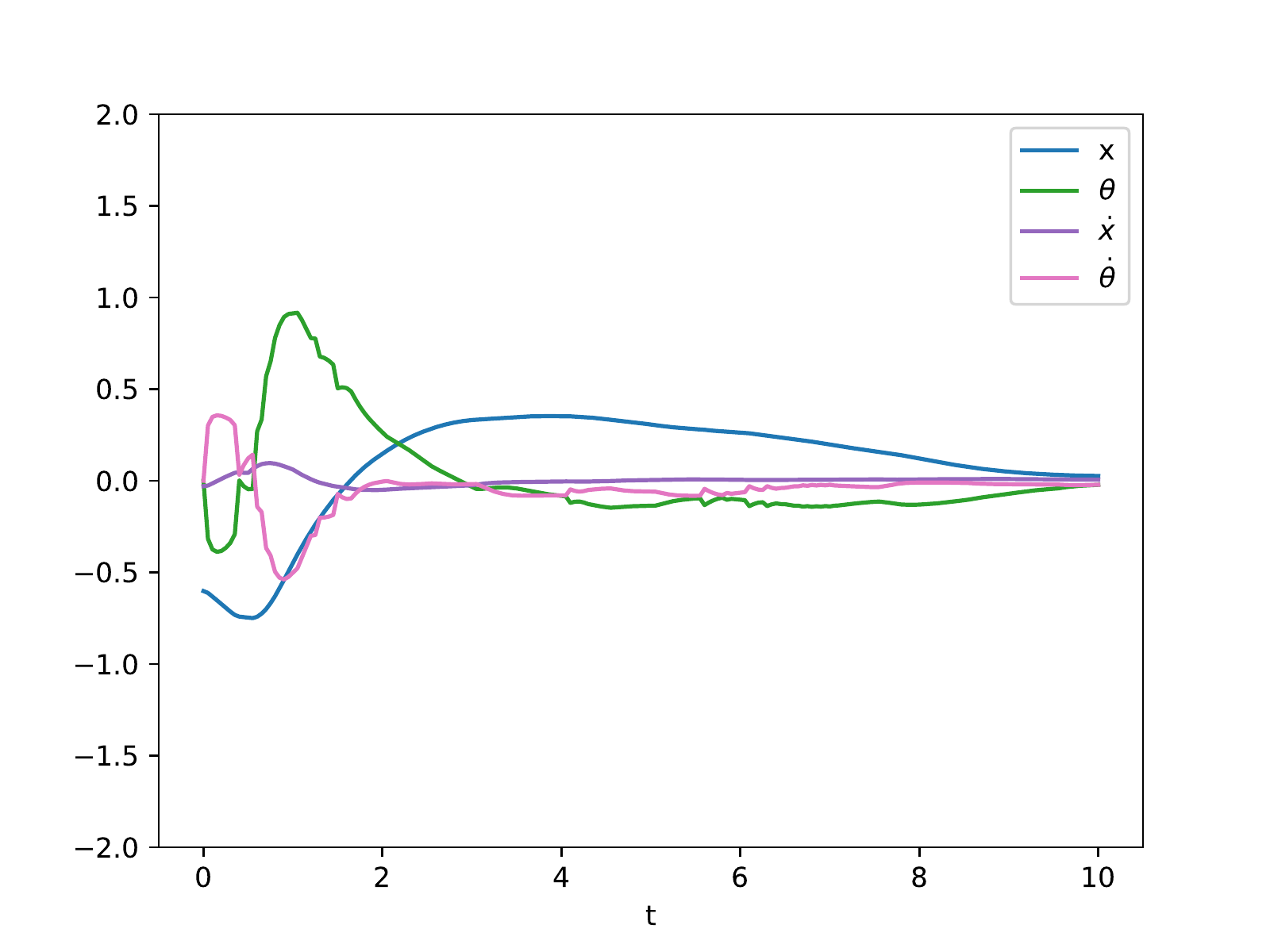}  
  \caption{Robust PPO}
\end{subfigure}
\caption{Trajectories of 6 different methods on the cart-pole domain under adversarial dynamics.}
\label{fig:cartpole_trajectories}
\end{figure}

\textbf{Runtime comparison.}
Tables~\ref{tab:eval_times}~and~\ref{tab:train_times} show the evaluation and training time of our methods and the baselines over 50 episodes run in parallel.
In the NLDI cases where $D = 0$, i.e., Generic NLDI ($D=0$) and Quadrotor, our projection adds only a very small computational cost.
In the other cases, the additional computational cost is more significant, but our method is still far less expensive than the Robust MPC method.

\begin{table}
\begin{center}
\vspace{-10pt}
\begin{small}
\begin{tabular}{ r | r r r !{\color{gray}\vrule} r r r r r}
\multirowcell{2}{\textbf{Environment}} & \multirowcell{2}{\textbf{LQR}} & \multirowcell{2}{\textbf{MBP}} & \multirowcell{2}{\textbf{PPO}} & \multirowcell{2}{\textbf{Robust} \\ \textbf{LQR}} &
\multirowcell{2}{\textbf{Robust} \\ \textbf{MPC}} &
\multirowcell{2}{\textbf{RARL}} & \multirowcell{2}{\textbf{Robust} \\ \textbf{MBP}$^*$}  & \multirowcell{2}{\textbf{Robust} \\ \textbf{PPO}$^*$}  \\ 
& & & & & \\
\hline
Generic NLDI ($D = 0$)
& 0.63 & 0.61 & 0.84 & 0.57 & 718.06 & 0.71 & 0.73 & 0.94 \\
\hline
Generic NLDI ($D \neq 0$)
& 0.64 & 0.62 & 0.83 & 0.58 & 824.86 & 0.81 & 15.13 & 25.38 \\
\hline
Cart-pole
& 0.55 & 0.67 & 0.84 & 0.53 & 646.90 & 0.84 & 10.12 & 13.37 \\
\hline
Quadrotor
& 0.95 & 0.98 & 1.19 & 0.88 & 3348.68 & 1.14 & 1.15 & 1.30 \\
\hline
Microgrid
& 0.58 & 0.61 & 0.79 & 0.57 & 601.90 & 0.74 & 8.14 & 10.25 \\
\hline
Generic PLDI
& 0.57 & 0.54 & 0.76 & 0.51 & 819.24 & 0.73 & 69.35 & 64.03 \\
\hline
Generic H$_\mathbf{\infty}$
& 0.84 & 0.80 & 1.03 & 0.76 & N/A & 1.00 & 47.81 & 63.67 \\
\hline
\end{tabular}
\end{small}
\vspace{3pt}
\caption{Time (in seconds) taken to run each method on the test set of every environment for 50 episodes run in parallel.}
\label{tab:eval_times}
\end{center}
\end{table}

\begin{table}
\begin{center}
\vspace{-10pt}
\begin{small}
\begin{tabular}{ r | r r !{\color{gray}\vrule} r r r}
\multirowcell{2}{\textbf{Environment}} & \multirowcell{2}{\textbf{MBP}} & \multirowcell{2}{\textbf{PPO}} &
\multirowcell{2}{\textbf{RARL}} & \multirowcell{2}{\textbf{Robust} \\ \textbf{MBP}$^*$}  & \multirowcell{2}{\textbf{Robust} \\ \textbf{PPO}$^*$}  \\ 
& & & & & \\
\hline
Generic NLDI ($D = 0$)
& 26.36 & 101.77 & 102.37 & 30.78 & 114.60 \\
\hline
Generic NLDI ($D \neq 0$)
& 26.46 & 100.79 & 82.53 & 221.35 & 1158.28 \\
\hline
Cart-pole
& 25.49 & 87.04 & 98.90 & 146.34 & 689.93 \\
\hline
Quadrotor
& 41.24 & 131.48 & 112.95 & 46.13 & 159.06 \\
\hline
Microgrid
& 23.03 & 112.52 & 87.71 & 113.61 & 436.64 \\
\hline
\end{tabular}
\end{small}
\vspace{3pt}
\caption{Time (in minutes) taken to train each method in every environment.}
\label{tab:train_times}
\end{center}
\end{table}

\section{Experiments for PLDIs and $H_\infty$ control settings}
\label{appsec:pldi-hinf-experiments}

In addition to the NLDI settings explored in the main text, we test the performance of our method on PLDI and $H_\infty$ control settings.
As for the experiments in the main text, we choose a time discretization based on the speed at which the system evolves, and run each episode for 200 steps over this discretization.
In both cases, we use a randomly generated LQR objective where the matrices $Q^{1/2}$ and $R^{1/2}$ are drawn i.i.d.~from a standard normal distribution.

\textbf{Synthetic PLDI setting.} 
We generate PLDI instances~\eqref{appeq:pldi-dyn} with $s=5$, $a=3$, and $L=3$.
Specifically, we generate convex hull matrices $(A_1, B_1),\ldots,(A_{3},B_{3})$ i.i.d.~from normal distributions, and generate $(A(t),B(t))$ by using a randomly-initialized neural network with softmax output to weight the convex hull matrices.
Episodes were run for 2 seconds at a discretization of 0.01 seconds.

\textbf{Synthetic $\boldmath{H_\infty}$ setting.} 
We generate $H_\infty$ control instances~\eqref{appeq:hinfty-dyn} with $s=5$, $a=3$, and $d=2$ by generating matrices $A, B$ and $G$ i.i.d.~from normal distributions.
The disturbance $w(t)$ was produced using a randomly-initialized neural network, with its output scaled to satisfy the $\mathcal{L}_2$ bound on the disturbance.
Specifically, we scaled the output of the neural network to satisfy an attenuating norm-bound on the disturbance; at time $t$, the norm-bound was given by $20 \times f(2 \times t/T)$, where $T$ is the time horizon and $f$ is the standard normal PDF function.
Episodes were run for $T=2$ seconds at a discretization of 0.01 seconds.

Results are given in Figure~\ref{fig:additional_results} and Table~\ref{tab:additional_results}.

\begin{figure}[t]
    \centering
    \includegraphics[trim=0 0 30 0,clip,width=0.99\textwidth]{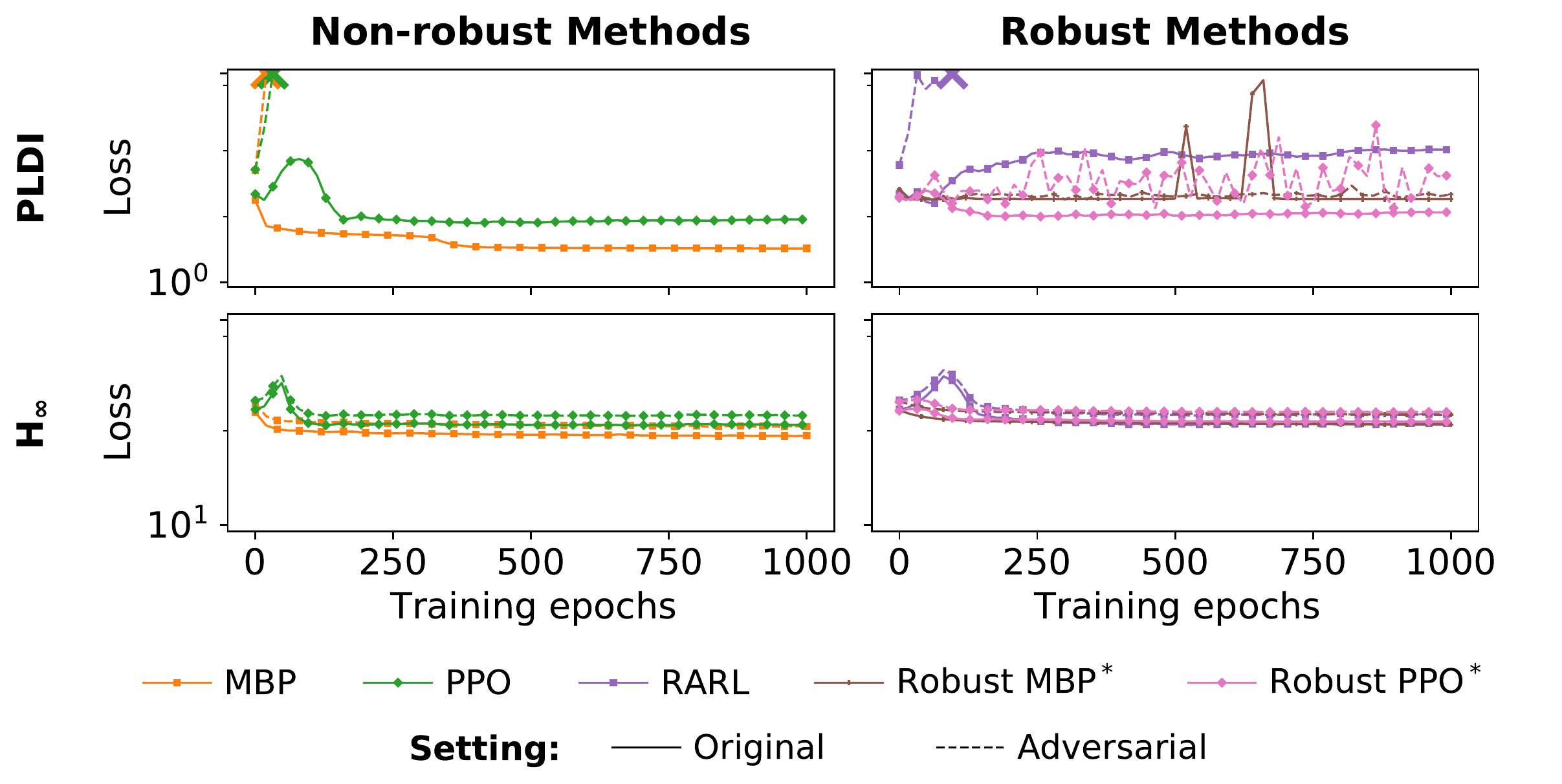}
    \vspace{-8pt}
    \caption{Representative results for our experimental settings.
    For each training epoch (10 updates for the MBP model and 18 for PPO), we report average quadratic loss over 50 episodes, and use ``X'' to indicate cases where the relevant method became unstable. (Lower loss is better.)
    Our robust methods (denoted by $^*$) improve performance over Robust LQR in the average case, while (unlike the non-robust methods) remaining stable under adversarial dynamics throughout the training process.}
    \vspace{-10pt}
    \label{fig:additional_results}
\end{figure}

\begin{table}
\begin{center}
\begin{small}
\begin{tabular}{ r  r | c c c !{\color{gray}\vrule} c c c c c}
\multicolumn{2}{c|}{\multirowcell{2}{\textbf{Environment}}} & \multirowcell{2}{\textbf{LQR}} & \multirowcell{2}{\textbf{MBP}} & \multirowcell{2}{\textbf{PPO}} & \multirowcell{2}{\textbf{Robust} \\ \textbf{LQR}} &
\multirowcell{2}{\textbf{Robust} \\ \textbf{MPC}} &
\multirowcell{2}{\textbf{RARL}} & \multirowcell{2}{\textbf{Robust} \\ \textbf{MBP}$^*$}  & \multirowcell{2}{\textbf{Robust} \\ \textbf{PPO}$^*$}  \\ 
& & & & & \\
\hline
\multirow{2}{*}{Generic PLDI}
& O & 96.3 & \textbf{3.3} & 8.0 & 19.2 & 19.2 & 15.8 & 18.6 & \textbf{10.2} \\
& A & \multicolumn{3}{c !{\color{gray}\vrule}}{--------- \emph{unstable} ---------} & 43.3 & 44.1 & \emph{unstable} & 21.9 & 16.1 \\
\hline
\multirow{2}{*}{Generic H$_\mathbf{\infty}$}
& O & 181 & \textbf{88} & 114 & 165 & N/A & \textbf{115} & 116 & 125 \\
& A & 219 & 112 & 143 & 206 & N/A & 145 & 147 & 158 \\
\hline
\end{tabular}
\end{small}
\vspace{3pt}
\caption{Performance of various approaches, both robust (right) and non-robust (left), on domains of interest.
We report average quadratic loss over 50 episodes under the original dynamics (O) and under an adversarial disturbance (A).
For the original dynamics (O), the best performance for both non-robust methods and robust methods is in bold (lower loss is better).
We use ``\emph{unstable}'' to indicate cases where the relevant method became unstable.
Our robust methods (denoted by $^*$) improve performance over Robust LQR in the average case, while remaining stable under adversarial dynamics, whereas the non-robust methods either went unstable or received much larger losses.}
\vspace{-17pt}
\label{tab:additional_results}
\end{center}
\end{table}

\section{Notes on linearization via PLDIs and NLDIs}
\label{appsec:pldi-lin-error}

While we linearize the cart-pole and quadrotor dynamics via NLDIs in our experiments, we note that these dynamics can also be characterized via PLDIs. More generally, in this section, we show how we can use the framework of PLDIs to model linearization errors arising in the analysis of nonlinear systems. 

Consider the nonlinear dynamical system
\begin{align}
\dot{x} = f\left(x,u\right)  \text{ with } f(0,0)=0.
\end{align}
for $x \in \mathbb{R}^s$ and $u \in \mathbb{R}^a$. Define $\xi = (x,u)$. We would like to represent the above system as a PLDI in the region $\mathcal{R}:=\{\xi \mid \underline{\xi} \leq \xi \leq \bar{\xi}\}$ including the origin. The mean value theorem states that for each component of $f$, we can write
\begin{align}  \label{mean value 1}
f_i(\xi) = f_i(0) + \nabla f_i(z)^T \xi,
\end{align}
for some $z=t \xi$, where $t \in [0,1]$. 
Now, let $p = s + a$.
Defining the Jacobian of $f$ as 
\begin{align}
\mathrm{J}_f(z) = \begin{bmatrix}
\nabla f_1(z)^T \\ \vdots \\ \nabla f_{p}(z)^T
\end{bmatrix},
\end{align}
and recalling that $f(0)=0$, we can rewrite \eqref{mean value 1} as
\begin{align} \label{mean value 3}
f(\xi) = \mathrm{J}_f(z)\xi.
\end{align}

Now, suppose we can find component-wise bounds on the matrix $\mathrm{J}_f(z)$ over $\mathcal{R}$, i.e,
\begin{align}
\underline{M}  \leq \mathrm{J}_f(z)\leq \bar{M}  \text{ for all } z \in \mathcal{R}.
\end{align}
We can then write
\begin{align} \label{mean value 4}
\mathrm{J}_f(z) &=  \sum_{1 \leq i,j \leq p} m_{ij}(t)E_{ij} \quad \text{with} \quad m_{ij}(t) \in [\underline{m}_{ij},\bar{m}_{ij}],
\end{align}
where $E_{ij} = e_ie_j^T$ and $e_i$ is the $i$-th unit vector in $\mathbb{R}^p$. 

We now seek to bound the Jacobian using polytopic bounds.
To do this, note that we can write
\begin{align} \label{mean value 5}
\mathrm{J}_f(z) &= \sum_{\kappa=1}^{2^{p^2}} \gamma_\kappa A_\kappa \quad \gamma_\kappa \geq 0, \ \sum_{\kappa} \gamma_\kappa = 1,
\end{align}
where $A_\kappa$'s are the vertices of the polytope in \eqref{mean value 4}, i.e.,
\begin{align}
\label{appeq:polytope}
A_\kappa \in \mathcal{V} = \left\{\sum_{1 \leq i, j \leq p} m_{ij} E_{ij} \mid m_{ij} \in \{\underline{m}_{ij},\bar{m}_{ij}\}\right\}.
\end{align}
Together, Equations~\eqref{mean value 1},~\eqref{mean value 3},~\eqref{mean value 5}, and~\eqref{appeq:polytope} characterize the original nonlinear dynamics as a PLDI. 

We note that this PLDI description is potentially very large; in particular, the size of $\mathcal{V}$ is exponential in the square of the number of non-constant entries in the Jacobian $\mathrm{J}_f(z)$, which could be as large as $2^{p^2} = 2^{(s+a)^2}$.
This problem size may therefore become intractable for larger control settings.

We note, however, that we can in fact express this PLDI more concisely as an NLDI. More precisely, we would like to find matrices $A,B,C$ parameterizing the form of NLDI below, which is equivalent to that presented in Equation~\eqref{eq:nldi-dyn} (see Chapter 4 of \cite{boyd1994linear}):
\begin{align}
\label{appeq:nldi}
Df(z) \in \{A+B\Delta C \mid \|\Delta\|_2 \leq 1\} \quad \text{ for all } z\in \mathcal{R}.
\end{align}
It can shown that the solution to the SDP
\begin{equation}
\begin{split}
\minimize&\;\; \mathrm{tr} (V+W) \\
\subjectto&\;\;
W \succ 0\\
&\;\;\begin{bmatrix}
{V} & {\left(A_\kappa-A\right)^{T}} \\
{A_\kappa-A} & {W}
\end{bmatrix} \succeq 0,\; \forall A_\kappa \in \mathcal{V}
\end{split}
\end{equation}
yields the matrices $A$, $B$, and $C$ with $V=C^T C$ and $W=B B^T$, which can be used to construct NLDI~\eqref{appeq:nldi}.
While the NLDI here is more concise than the PLDI, the trade-off is that the NLDI norm bounds obtained via this method may be rather loose. 
As such, for our settings, we obtain NLDI bounds numerically (see Appendices~\ref{appsec:cartpole} and~\ref{appsec:quadrotor}), as these are tighter than NLDI specifications obtained via the above method (though they are potentially slightly inexact). 
An alternative approach would be to examine how to tighten the conversion from PLDIs to NLDIs, which has been explored in other work (e.g. \cite{kuiava2013new}).

\end{document}